\documentclass{article}
\usepackage[utf8]{inputenc}
\usepackage{amsmath,amsfonts}
\usepackage{amsthm}
\usepackage{amssymb}
\usepackage{bm}
\usepackage{bbm}
\usepackage{natbib}
\usepackage[T1]{fontenc}    
\usepackage{hyperref}       
\usepackage{url}            
\usepackage{booktabs}       
\usepackage{amsfonts}       
\usepackage{nicefrac}       
\usepackage{microtype}      
\usepackage{float} 
\usepackage{color} 
\usepackage{algorithm}
\usepackage{algpseudocode}
\usepackage{graphicx}
\usepackage{fullpage}
\usepackage[T1]{fontenc}
\usepackage{lmodern}
\usepackage{xcolor}
\hypersetup{
	colorlinks,
	linkcolor={red!50!black},
	citecolor={blue!50!black},
	urlcolor={blue!80!black}
}

\bibliographystyle{apalike}

\newcommand{\todo}[1]{%
\ifmmode
\text{\textcolor{red}{TODO: #1}}
\else
\textcolor{red}{TODO: #1}
\fi
}

\DeclareMathOperator{\E}{\mathbb E}
\DeclareBoldMathCommand{\w}{w}
\DeclareBoldMathCommand{\u}{u}
\DeclareBoldMathCommand{\W}{W}
\DeclareBoldMathCommand{\U}{U}
\DeclareBoldMathCommand{\s}{s}
\DeclareBoldMathCommand{\x}{x}
\DeclareBoldMathCommand{\1}{1}
\DeclareBoldMathCommand{\0}{0}
\DeclareBoldMathCommand{\g}{g}
\DeclareBoldMathCommand{\z}{z}
\DeclareBoldMathCommand{\e}{e}
\DeclareBoldMathCommand{\p}{p}
\DeclareBoldMathCommand{\y}{y}

\newcommand{\reals}{\mathbb{R}}
\newcommand\doma[1]{\mathcal{#1}}
\newcommand{\regret}{\mathcal{R}}
\newcommand{\sumT}{\sum_{t=1}^T}

\newcommand{\sumk}{\sum_{k = 1}^K}
\newcommand{\sumd}{\sum_{i = 1}^d}

\newcommand{\ystart}{y_t^\star}
\newcommand{\mstart}{m_t^\star}

\newcommand{\yhatt}{\hat{y}_t}

\newcommand{\id}{\mathbbm{1}}
\newcommand{\ptstar}{p_t^\star}
\newcommand{\half}{\tfrac{1}{2}}
\newcommand\inner[2]{\langle #1, #2 \rangle}
\newcommand\Eb[1]{\E\left[ #1\right]}

\DeclareMathOperator*{\argmin}{arg\,min}
\DeclareMathOperator*{\argmax}{arg\,max}
\DeclareMathOperator{\sign}{sign}

\newtheorem{theorem}{Theorem}
\newtheorem{lemma}{Lemma}

\title{Exploiting the Surrogate Gap in Online Multiclass Classification}

\author{%
    Dirk van der Hoeven \\
    Leiden University \\
    dirk@dirkvanderhoeven.com
    }
\date{}

\begin{document}

\maketitle

\begin{abstract}
    We present \textproc{Gaptron}, a randomized first-order algorithm for online multiclass classification. In the full information setting we provide expected mistake bounds for \textproc{Gaptron} with respect to the logistic loss, hinge loss, and the smooth hinge loss with $O(K)$ regret, where the expectation is with respect to the learner's randomness and $K$ is the number of classes. In the bandit classification setting we show that \textproc{Gaptron} is the first linear time algorithm with $O(K\sqrt{T})$ expected regret.  Additionally, the expected mistake bound of \textproc{Gaptron} does not depend on the dimension of the feature vector, contrary to previous algorithms with $O(K\sqrt{T})$ regret in the bandit classification setting. We present a new proof technique that exploits the gap between the zero-one loss and surrogate losses rather than exploiting properties such as exp-concavity or mixability, which are traditionally used to prove logarithmic or constant regret bounds. 
\end{abstract}

\section{Introduction}\label{sec:introduction}

In online multiclass classification a learner has to repeatedly predict the label that corresponds to a feature vector. Algorithms in this setting have a wide range of applications ranging from predicting the outcomes of sport matches to recommender systems. In some applications such as sport forecasting the learner obtains the true label regardless of what outcome the learner predicts, but in other applications such as recommender systems the learner only learns whether or not the label he predicted was the true label. The setting in which the learner receives the true label is called the full information multiclass classification setting and the setting in which the learner only receives information about the predicted label is called the bandit multiclass classification setting. 

In this paper we consider both the full information and bandit multiclass classification settings. In both settings the environment chooses the true outcome $y_t \in \{1, \ldots, K\}$ and feature vector $\x_t \in \reals^d$. The environment then reveals the feature vector to the learner, after which the learner issues a (randomized) prediction $y_t' \in \{1, \ldots, K\}$. The goal of both settings is to minimize the number of expected mistakes the learner makes with respect to the best offline linear predictor $\U \in \reals^{K \times d}$, which essentially contains a set of parameters per class. Standard practice in both settings is to upper bound the non-convex zero-one loss with a convex surrogate loss $\ell_t$ (see for example \citet{bartlett2006convexity}). This leads to guarantees of the form
\begin{align*}
    \E\left[\sumT \id[y_t' \not = y_t]\right] = \E\left[\sumT \ell_t(\U)\right] + \regret_T,
\end{align*}
where $\id$ is the indicator function, $y_t$ is the true label, the expectation is taken with respect to the learner's randomness, and $\regret_T$ is the regret after $T$ rounds. 

We introduce \textproc{Gaptron}, which is a randomized first-order algorithm that exploits the gap between the zero-one loss and the surrogate loss. In the full information multiclass classification setting \textproc{Gaptron} has $O(K)$ regret with respect to various surrogate losses. In the bandit multiclass classification setting we show that \textproc{Gaptron} has $O(K\sqrt{T})$ regret with respect to the same surrogate losses as in the full information setting. Importantly, our regret bounds do not depend on the dimension of the feature vector in either the full or bandit information setting, contrary to previous results with similar regret bounds. Furthermore, in the bandit multiclass classification setting \textproc{Gaptron} is the first $O(dK)$ running time algorithm with $O(K\sqrt{T})$ regret.

To achieve these results we develop a new proof technique. Standard approaches that lead to small regret bounds exploit properties of the surrogate loss function such as strong convexity, exp-concavity \citep{hazan2007logarithmic}, or mixability \citep{vovk2001competitive}. Instead, inspired by the recent success of \citet{neu2020fast} in online classification with abstention\footnote{In fact, in Appendix \ref{app:adahedge with abstentions} we slightly generalize the results of \citet{neu2020fast}.}, we exploit the \emph{gap} between the zero-one loss, which is used to measure the performance of the learner, and the surrogate loss, which is used to measure the performance of the comparator $\U$, hence the name \textproc{Gaptron}. 

For an overview of our results and a comparison to previous work see Table \ref{tab:overview}. Here we briefly discuss the most relevant literature to place our results into perspective. A more detailed comparison can be found in the relevant sections. The full information multiclass classification setting is well understood and has been studied by many authors. Perhaps the most well known algorithm in this setting is the \textproc{Perceptron} \citep{rosenblatt1958perceptron} and its multiclass versions \citep{crammer2003ultraconservative, fink2006online}. The \textproc{Perceptron} is a deterministic first-order algorithm which has $O(\sqrt{T})$ regret with respect to the hinge loss in the worst-case. Variants of the \textproc{Perceptron} such as \textproc{Arow} \citep{crammer2009adaptive} and the second-order \textproc{Perceptron} \citep{cesa2005second} are second-order methods which result in  a possibly smaller regret at the cost of longer running time. Online logistic regression \citep{berkson1944application} is an alternative to the \textproc{Perceptron} which has been thoroughly studied. For an overview of results for online logistic regression we refer the reader to \citet{shamir2020logistic}. Here we mention a recent result by \citet{foster2018logistic}, who use Exponential Weights \citep{vovk1990aggregating, LittlestoneWarmuth1994} to optimize the logistic loss and obtain a regret bound of $O(dK\ln(DT+1))$, where $D$ is an upper bound on the Frobenius norm of $\U$, with a polynomial time algorithm.

The bandit multiclass classification setting was first studied by \citet{kakade2008efficient} and is a special case of the contextual bandit setting \citep{langford2008epoch}. \citet{kakade2008efficient} present a first-order algorithm called \textproc{Banditron} with a $O((DK)^{1/3}T^{2/3})$ regret bound with respect to the hinge loss. The impractical \textproc{EXP4} algorithm \citep{auer2002nonstochastic} has a $O(\sqrt{TdK\ln(T+1)})$ regret bound and \citet{abernethy2009efficient} posed the problem of obtaining a practical algorithm which attains an $O(K\sqrt{T})$ regret bound. Several authors have proposed polynomial running time algorithms that have a regret bound of $O(K\sqrt{d T\log(T+1)})$ such as \textproc{Newtron} \citep{hazan2011newtron}, \textproc{Confidit} \citep{crammer2009adaptive},  \textproc{Soba} \citep{beygelzimer2017efficient}, and \textproc{Obama} \citep{foster2018logistic}.

\begin{table}[t]
\caption{Main results and comparisons with previous work (see Section \ref{sec:prelim} for notation). The references are for the regret bounds, not necessarily for the first analysis of the algorithm. For this table we assume that $\|\x_t\| \leq 1 ~\forall t$ and denote by $L_T = \sumT\ell_t(\U)$ the sum of the surrogate losses of the comparator.}
\centering
\label{tab:overview}
\resizebox{\textwidth}{!}{\begin{tabular}{p{40mm}p{20mm}ccc}
  \hline
Algorithm  & Loss & Regret full information setting & Regret bandit setting & Time (per round)\\ 
  \hline
  \hline
  \textproc{Perceptron}\citep{fink2006online, kakade2008efficient}  & hinge & $O(\|\U\|^2 + \|\U\|\sqrt{L_T})$ & $O((D K)^{1/3}T^{2/3})$ & $O(dK)$ \\
  \hline
  Second-Order \textproc{Perceptron} \citep{orabona2012beyond, beygelzimer2017efficient} & hinge\footnotemark & $O(\frac{\kappa}{2 - \kappa}\|\U\|^2 + \frac{dK}{\kappa(2 - \kappa)} \ln(L_T) )$ & $O(\|\U\|^2 + \frac{K}{\kappa}\sqrt{dT\ln(T)})$ & $O((dK)^2)$\\
  \hline
  \textproc{ONS} \citep{hazan2014logistic, hazan2011newtron} & logistic & $O(\exp(D)dK\ln(T))$ & $O(d K^3 D T^{2/3})$ & $O((dK)^2)$ \\
  \hline
  Vovk's Aggregating Algorithm \citep{foster2018logistic} & logistic  & $O(dK\ln(DT))$ & $O(K\sqrt{dT\ln(DT)})$ & $O(\max\{dK, T\}^{12})$ \\
  \hline
  \textproc{Gaptron} (This work) & logistic, hinge, smooth hinge  & $O(K \|\U\|^2)$ & $O(KD\sqrt{T})$ & $O(dK)$ \\
  \hline
\end{tabular}}
\end{table}
\footnotetext{These results hold for a family of loss functions parametrized by $\kappa \in [0, 1]$, which includes the hinge loss.}

\section{Preliminaries}\label{sec:prelim}

\paragraph{Notation.} Let $\1$ and $\0$ denote vectors with only ones and zeros respectively and let $\e_k$ denote the basis vector in direction $k$. The inner product between vectors $\g \in \reals^d$ and $\w \in \reals^d$ is denoted by $\langle \w, \g \rangle$. The rows of matrix $\W \in \reals^{K \times d}$ are denoted by $\W^1, \ldots, \W^K$. We will interchangeably use $\W$ to denote a matrix and a column vector in $\reals^{Kd}$ to avoid unnecessary notation. The vector form of matrix $\W$ is $(\W^1, \ldots, \W^K)^\top$. The Frobenius norm of matrix $\W$ is denoted by $\|\W\| = \sqrt{\sumk \sum_{i = 1}^d W_{k, i}^2}$. Likewise the $l_2$ norm of vector $\x$ is denoted by $\|\x\| = \sqrt{\sumd x_i^2}$. We denote the Kronecker product between matrices $\W$ and $\U$ by $\W \otimes \U$. For a given round $t$ we use $\E_t[\cdot]$ to denote the conditional expectation given the predictions $y_1', y_2', \ldots, y_{t-1}'$.

\subsection{Multiclass Classification}

The multiclass classification setting proceeds in rounds $t = 1, \ldots, T$. In each round $t$ the environment first picks an outcome $y_t \in \{1, \ldots K\}$ and feature vector $\x_t$ such that $\|\x_t\| \leq X$ for all $t$. Before the learner makes his prediction $y_t'$ the environment reveals the feature vector $\x_t$ which the learner may use to form $y_t'$. In the full information multiclass classification setting, after the learner has issued $y_t'$, the environment reveals the outcome $y_t$ to the learner. In the bandit multiclass classification setting \citep{kakade2008efficient} the environment only reveals whether the prediction of the learner was correct or not, i.e. $\id[y_t' \not = y_t]$. We only consider the adversarial setting, which means that we make no assumptions on how $y_t$ or $\x_t$ is generated. In both settings we allow the learner to use randomized predictions. The goal of the multiclass classification setting is to control the number of expected mistakes the learner makes in $T$ rounds: $M_T = \E\left[\sumT \id[y_t' \not = y_t]\right]$, where the expectation is taken with respect to the learner's randomness. 

Since the zero-one loss is non-convex a standard approach is to use a surrogate loss $\ell_t$ as a function of a weight matrix $\W_t \in \doma{W}$, where $\doma{W} = \{\W: \|\W\| \leq D\}$. The surrogate loss function is a convex upper bound on the zero-one loss, which is then optimized using an Online Convex Optimization algorithm such as Online Gradient Descent (OGD) \citep{zinkevich2003}, Online Newton Step (ONS) \citep{hazan2007logarithmic}, or Exponential Weights (EW) \citep{vovk1990aggregating, LittlestoneWarmuth1994}. 
In this paper we treat three surrogate loss functions: logistic loss, the hinge loss, and the smooth hinge loss, all of which result in different guarantees on the number of expected mistakes a learner makes.

\section{\textproc{Gaptron}} \label{sec:gap}

\begin{algorithm}[t]
\caption{\textproc{Gaptron}}\label{alg:gapjumper}
\begin{algorithmic}[1]
\Require Learning rate $\eta > 0$, exploration rate $\gamma \in [0, 1]$, and gap map $a:\reals^{K \times d} \times \reals^d \rightarrow [0, 1]$ \\
\textbf{Initialize} $\W_1 = \0$
\For{$t = 1 \ldots T$}
\State Obtain $\x_t$ 
\State Let $\ystart = \argmax_{k} \inner{\W_t^k}{\x_t}$
\State Set $\p_t' = (1 - \max\{a(\W_t, \x_t), \gamma \}) \e_{\ystart} + \max\{a(\W_t, \x_t), \gamma \} \frac{1}{K}\1$
\State Predict with label $y_t' \sim \p_t'$
\State Obtain $\id[y_t' \not = y_t]$ and set $\g_t = \nabla \ell_t(\W_t)$
\State Update $\W_{t+1} = \argmin_{\W \in \doma{W}} \eta \inner{\g_t}{\W} + \frac{1}{2}\|\W - \W_t\|^2$
\EndFor
\end{algorithmic}
\end{algorithm}

In this section we discuss \textproc{Gaptron} (Algorithm \ref{alg:gapjumper}). The prediction $y_t'$ is sampled from 
\begin{equation*}
    \p_t' = (1 - \max\{a(\W_t, \x_t), \gamma \}) \e_{\ystart} + \max\{a(\W_t, \x_t), \gamma \} \frac{1}{K}\1, 
\end{equation*}
where $\gamma \in [0, 1]$ and $a:\reals^{K \times d} \times \reals^d \rightarrow [0, 1]$ to be specified later. In the full information setting $\gamma$ is set to 0 but in the bandit setting $\gamma$ is used to guarantee that each label is sampled with at least probability $\frac{\gamma}{K}$, which is a common strategy in bandit algorithms (see for example \citet{auer2002nonstochastic}). The fact that each label is sampled with at least probability $\frac{\gamma}{K}$ is important because in the bandit setting we use importance weighting to form estimated surrogate losses $\ell_t$ and their gradients $\g_t = \nabla \ell_t(\W_t)$ and we need to control the variance of these estimates. The main difference between \textproc{Gaptron} and standard algorithms for multiclass classification is the $a$ function, which governs the mixture that forms $\p_t'$. In fact, if we set $a(\W, \x) = 0$, $\gamma = 0$, and choose $\ell_t$ to be the hinge loss we recover an algorithm that closely resembles the \textproc{Perceptron} \citep{rosenblatt1958perceptron}, which can be interpreted as OGD on the hinge loss\footnote{Other interpretations exist which lead to possibly better guarantees, see for example \cite{beygelzimer2017efficient}.}. \textproc{Gaptron} also uses OGD, which is used to update weight matrix $\W_t$, which in turn is used to form distribution $\p_t'$. For convenience we will define $a_t = a(\W_t, \x_t)$. 

The role of $a$, which we will refer to as the gap map, is to exploit the gap between the surrogate loss and the zero-one loss. Before we explain how we exploit said gap we first present the expected mistake bound of \textproc{Gaptron} in Lemma \ref{lem: surrogate gap}. The proof of Lemma \ref{lem: surrogate gap} follows from applying the regret bound of OGD and working out the expected number of mistakes. The formal proof can be found in Appendix \ref{app:gap}.

\begin{lemma}\label{lem: surrogate gap}
For any $\U \in \doma{W}$ Algorithm \ref{alg:gapjumper} satisfies
\begin{align*}
    \E\left[\sumT \id[y_t' \not = y_t]\right] \leq &  \E\left[\sumT \ell_t(\U)\right] + \frac{\|\U\|^2}{2\eta} + \gamma \frac{K-1}{K} T  \\
     & + \sumT \underbrace{\E\left[(1 - a_t)\id[\ystart \not = y_t] + a_t \frac{K-1}{K} - \ell_t(\W_t) + \frac{\eta}{2}\|\g_t\|^2\right]}_{\textnormal{surrogate gap}}.
\end{align*}
\end{lemma}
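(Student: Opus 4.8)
The plan is to start from the standard OGD regret bound applied to the (possibly estimated) surrogate losses $\ell_t$, and then carefully rewrite the expected number of mistakes in terms of the structure of $\p_t'$. First I would invoke the well-known regret guarantee for Online Gradient Descent with the update $\W_{t+1} = \argmin_{\W \in \doma{W}} \eta \inner{\g_t}{\W} + \tfrac12\|\W - \W_t\|^2$ and initialization $\W_1 = \0$: for any $\U \in \doma{W}$,
\begin{align*}
    \sumT \inner{\g_t}{\W_t - \U} \leq \frac{\|\U\|^2}{2\eta} + \frac{\eta}{2}\sumT \|\g_t\|^2.
\end{align*}
By convexity of $\ell_t$ (applied to the estimated loss in the bandit case, whose conditional expectation is the true loss), $\ell_t(\W_t) - \ell_t(\U) \leq \inner{\g_t}{\W_t - \U}$, so summing and taking expectations gives
\begin{align*}
    \E\left[\sumT \ell_t(\W_t)\right] \leq \E\left[\sumT \ell_t(\U)\right] + \frac{\|\U\|^2}{2\eta} + \frac{\eta}{2}\E\left[\sumT\|\g_t\|^2\right].
\end{align*}

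Next I would compute the per-round expected zero-one loss. Conditioning on the past (so that $\W_t$, $\x_t$, $\ystart$, and $a_t$ are fixed), the probability of a mistake under $\p_t'$ is
\begin{align*}
    \E_t\left[\id[y_t' \not= y_t]\right] = (1 - \max\{a_t,\gamma\})\,\id[\ystart \not= y_t] + \max\{a_t,\gamma\}\,\frac{K-1}{K},
\end{align*}
since with probability $\max\{a_t,\gamma\}$ we sample uniformly (which is wrong with probability $\tfrac{K-1}{K}$) and otherwise we predict $\ystart$. The elementary bound $\max\{a_t,\gamma\} \le a_t + \gamma$ together with $\id[\ystart\not=y_t] \le 1$ and $\tfrac{K-1}{K}\le 1$ lets me peel off the $\gamma$ contribution: the mistake probability is at most
\begin{align*}
    (1 - a_t)\,\id[\ystart\not=y_t] + a_t\,\frac{K-1}{K} + \gamma\,\frac{K-1}{K}.
\end{align*}
Summing over $t$, taking total expectation, and then adding and subtracting $\E[\sum_t \ell_t(\W_t)]$ produces exactly the claimed decomposition once I substitute the OGD bound for the $\E[\sum_t \ell_t(\W_t)]$ term: the $\tfrac{\|\U\|^2}{2\eta}$ and $\gamma\tfrac{K-1}{K}T$ terms appear, the $\E[\sum_t \ell_t(\U)]$ term appears, and the leftover $\E[(1-a_t)\id[\ystart\not=y_t] + a_t\tfrac{K-1}{K} - \ell_t(\W_t) + \tfrac{\eta}{2}\|\g_t\|^2]$ is the surrogate gap.

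The only genuinely delicate point is handling the bandit case rigorously: there $\g_t$ is a random (importance-weighted) estimator, so the inequality $\ell_t(\W_t) - \ell_t(\U) \le \inner{\g_t}{\W_t - \U}$ must be read as holding for the estimated loss, and one must check that $\E_t[\g_t] = \nabla(\E_t[\ell_t])(\W_t)$ and that $\E_t[\ell_t(\U)]$ equals the true surrogate loss, so that taking expectations recovers the stated bound in terms of the true $\ell_t$. In the full-information case this is immediate ($\gamma = 0$ and $\ell_t$ is the actual loss). I expect the bookkeeping in the bandit case — making sure all conditional expectations line up and that no term involving the unbiased estimator is dropped — to be the main thing to be careful about; everything else is the routine OGD argument plus the one-line computation of the mistake probability under $\p_t'$.
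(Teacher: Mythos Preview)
Your proposal is correct and follows essentially the same approach as the paper: apply the OGD regret bound, compute the conditional mistake probability under $\p_t'$, and use $(1-\max\{a_t,\gamma\})\le 1-a_t$ together with $\max\{a_t,\gamma\}\le a_t+\gamma$ to separate the $\gamma\tfrac{K-1}{K}T$ term from the surrogate gap. Your extra remarks on the bandit bookkeeping are sound but not strictly needed for the lemma as stated, since the bound is phrased directly in terms of the (possibly importance-weighted) $\ell_t$ and $\g_t$ throughout.
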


As we mentioned before, standard classifiers such as the \textproc{Perceptron} simply set $a(\W, \x) = 0$ and upper bound $\id[\ystart \not = y_t] - \ell_t(\W_t)$ by 0. In the full information setting we can set $\gamma = 0$ and $\eta = \sqrt{\frac{\|\U\|^2}{\sumT \|\g_t\|^2}}$ to obtain\footnote{Although such tuning is impossible due to not knowing $\|\U\|$ or $\sumT \|\g_t\|^2$ there exist algorithms that are able to achieve the same guarantee up to logarithmic factors, see for example \citet{cutkosky2018black}.} $M_T \leq \sumT \ell_t(\U) + \|\U\|\sqrt{\sumT \|\g_t\|^2}$. However, the gap between the surrogate loss and the zero-one loss can be large. In fact, even with $a(\W, \x) = 0$, the gap between the zero-one loss and the surrogate loss is large enough to bound $\id[\ystart \not = y_t] - \ell_t(\W_t) + \frac{\eta}{2} \|\g_t\|^2$ by 0 for some loss functions and values of $\W_t$ and $\x_t$.

\begin{figure}[t]
    \centering
    \includegraphics[width=\textwidth, height=6cm]{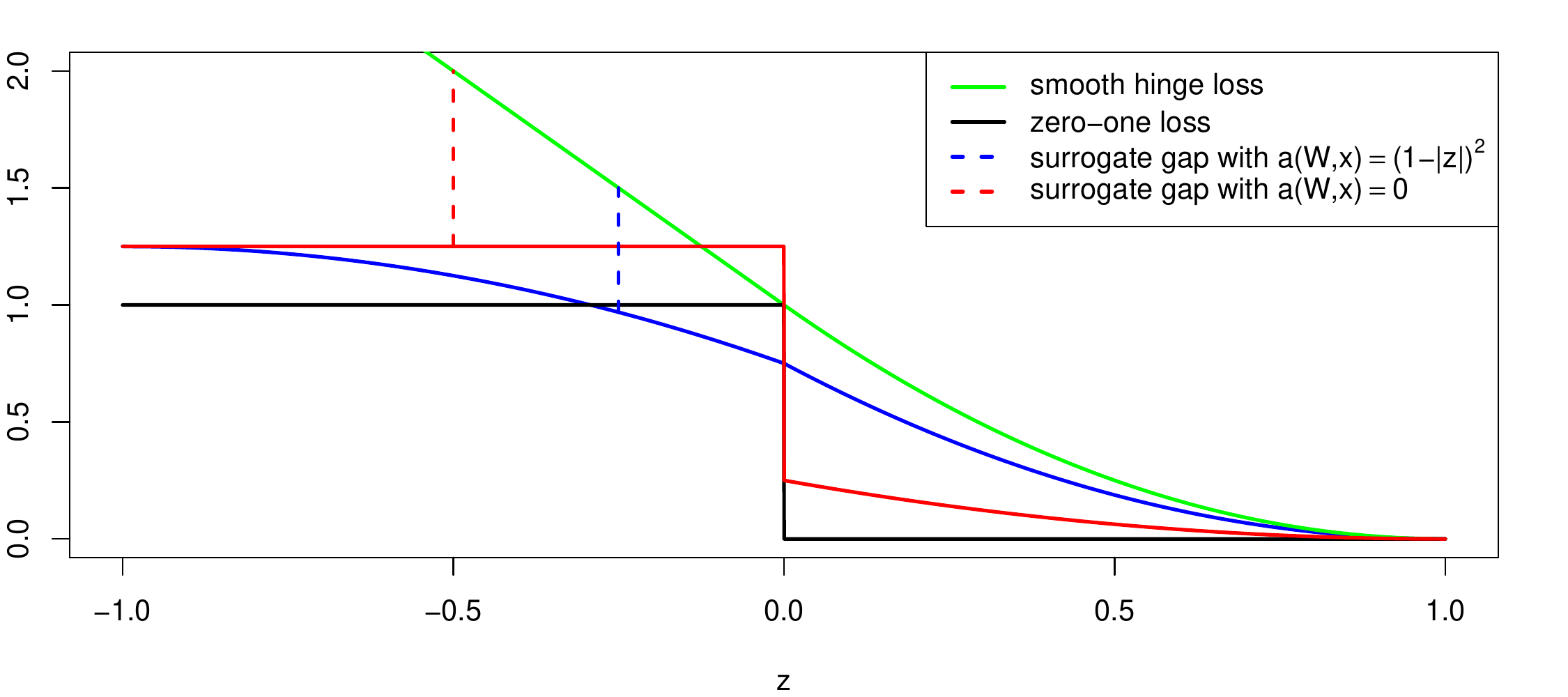}
    \caption{The surrogate gap for the smooth hinge loss as a function of margin $z$ with $\eta = \frac{1}{8}$, $\gamma = 0$, and $\|\x\| = 1$. The solid red line is given by $\id[z \leq 0] + \frac{\eta}{2}\|\g\|^2$, where $\|\g\|^2 = 4 (1 - z)^2$ if $z > 0$ and $\|\g\|^2 = 4$ otherwise. The solid blue line is given by $(1 - (1-|z|)^2)\id[z \leq 0] + \half (1-|z|)^2 + \frac{\eta}{2}\|\g\|^2$. The surrogate gap is positive whenever the red or blue line is above the green line.}
    \label{fig:smooth hinge gap}
\end{figure}

In Figure \ref{fig:smooth hinge gap} we can see a depiction of the surrogate gap for the smooth hinge loss for $K = 2$ \citep{rennie2005loss} in the full information setting (see Section \ref{sec: full smooth hinge} for the definition of the smooth multiclass  hinge loss). In the case where $K = 2$, $\W$ is a vector rather than a matrix and outcomes $y_t$ are coded as $\{-1, 1\}$. We see that with $a(W, \x) = 0$, only when margin $z = y \inner{\W}{\x} \in [-0.125, 0]$ the surrogate gap is not upper bounded by 0. Decreasing $\eta$ would increase the range for which the surrogate gap is bounded by zero, but only for $\eta = 0$ the surrogate gap is bounded by 0 everywhere. However, with $a(W, \x) = (1 - |z|)^2$ the surrogate gap is upper bounded by 0 for all $z$, which leads to $O(K)$ regret. The remainder of the paper is concerned with deriving different $a$ for different loss functions for which the surrogate gap is bounded by 0. For an overview of the different settings of \textproc{Gaptron} see Table \ref{tab:settings} in Appendix \ref{app:settings}. In the following section we start in the full information setting.

\section{Full Information Multiclass Classification} \label{sec:full info}

In this section we derive gap maps that allow us to upper bound the surrogate gap by 0 for the logistic loss, hinge loss, and smooth hinge loss in the full information setting. Note that the full information algorithms we compare with are deterministic whereas \textproc{Gaptron} is randomized. Throughout this section we will set $\gamma = 0$. We start with the result for logistic loss. 

\subsection{Logistic Loss}\label{sec: full logistic}

The logistic loss is defined as
\begin{align}\label{eq:full logistic loss}
    \ell_t(\W) = & -\log_2(\sigma(\W, \x_t, y_t)),
\end{align}
where $\sigma(\W, \x, k) = \frac{\exp(\inner{\W^k}{\x})}{\sum_{\tilde{k} = 1}^K \exp(\inner{\W^{\tilde{k}}}{\x})}$ is the softmax function. For the logistic loss we will use the following gap map:
\begin{align*}
    a(\W_t, \x_t) = 1- \id[\ptstar \geq 0.5]\ptstar,
\end{align*}
where $\ptstar = \max_k \sigma(\W_t, \x_t, k)$. This means that \textproc{Gaptron} samples a label uniformly at random as long as $\ptstar \leq 0.5$. While this may appear counter-intuitive at first sight note that when $\ptstar < 0.5$ the zero-one loss is upper bounded by the logistic loss regardless of what we play since $-\log_2(p) \geq 1$ for $p \in [0, 0.5]$, which we use to show that the surrogate gap is bounded by 0 whenever $\ptstar < 0.5$. The mistake bound of $\textproc{Gaptron}$ can be found in Theorem \ref{th:full logistic}. To prove Theorem \ref{th:full logistic} we show that the surrogate gap is bounded by 0 and then use Lemma \ref{lem: surrogate gap}. The formal proof can be found in Appendix \ref{app:full logistic}.

\begin{theorem}\label{th:full logistic}
Let $a(\W_t, \x_t) = 1- \id[\ptstar \geq 0.5]\ptstar$, $\eta = \frac{\ln(2)}{2 K X^2}$, $\gamma = 0$, and let $\ell_t$ be the logistic loss defined in \eqref{eq:full logistic loss}. Then for any $\U \in \doma{W}$ Algorithm \ref{alg:gapjumper} satisfies
\begin{align*}
    \E\left[\sumT  \id[y_t' \not = y_t]\right] \leq & \sumT \ell_t(\U) + \frac{K X^2 \|\U\|^2}{\ln(2)}.
\end{align*}
\end{theorem}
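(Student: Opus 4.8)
The plan is to apply Lemma~\ref{lem: surrogate gap} with $\gamma = 0$, which kills the $\gamma\frac{K-1}{K}T$ term, and then show that the \emph{surrogate gap} is nonpositive at every round; since the outer expectation preserves such a pointwise bound, it suffices to prove that for all admissible $\W_t,\x_t,y_t$
\[
(1 - a_t)\id[\ystart \neq y_t] + a_t\tfrac{K-1}{K} - \ell_t(\W_t) + \tfrac{\eta}{2}\|\g_t\|^2 \;\le\; 0 .
\]
Together with the remaining terms of Lemma~\ref{lem: surrogate gap}, which contribute exactly $\frac{\|\U\|^2}{2\eta} = \frac{KX^2\|\U\|^2}{\ln 2}$ after substituting $\eta = \frac{\ln 2}{2KX^2}$, this gives the theorem. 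Throughout write $p = \ptstar = \max_k \sigma(\W_t,\x_t,k)$ and $q = \sigma(\W_t,\x_t,y_t)$; since the softmax is monotone in its scores, $\ystart = \argmax_k\sigma(\W_t,\x_t,k)$, so $q \le p$, and moreover $\ell_t(\W_t) = -\log_2 q$ and $a_t = 1 - \id[p\ge\half]\,p$.

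The first ingredient is a gradient estimate. From the softmax identity $\nabla_{\W^k}\ell_t(\W_t) = \frac{1}{\ln 2}\bigl(\sigma(\W_t,\x_t,k) - \id[k=y_t]\bigr)\x_t$ and $\|\x_t\|\le X$ we get $\|\g_t\|^2 \le \frac{X^2}{(\ln 2)^2}\bigl((1-q)^2 + \sum_{k\neq y_t}\sigma(\W_t,\x_t,k)^2\bigr)$. Bounding $\sum_{k\neq y_t}\sigma_k^2 \le \bigl(\max_{k\neq y_t}\sigma_k\bigr)\sum_{k\neq y_t}\sigma_k \le (1-q)^2$ (using $\max_{k\neq y_t}\sigma_k \le \sum_{k\neq y_t}\sigma_k = 1-q$) yields the clean bound $\|\g_t\|^2 \le \frac{2X^2}{(\ln 2)^2}(1-q)^2$, hence $\frac{\eta}{2}\|\g_t\|^2 \le \frac{(1-q)^2}{2K\ln 2}$ for the chosen $\eta$.

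Now I would split on $p$. If $p < \half$ then $a_t = 1$, the mistake indicator drops out, $\ell_t(\W_t) = -\log_2 q \ge 1$ (as $q\le p<\half$), and the gap is at most $\frac{K-1}{K} - 1 + \frac{1}{2K\ln 2} = \frac{1}{K}\bigl(-1 + \frac{1}{2\ln 2}\bigr) < 0$. If $p \ge \half$ then $a_t = 1-p$, and I distinguish two sub-cases. If $\ystart = y_t$ then $q = p$ and the gap is at most $g(p) := (1-p)\frac{K-1}{K} + \log_2 p + \frac{(1-p)^2}{2K\ln 2}$; since $g(1)=0$, $g$ is concave on $[\half,1]$ (as $g''(p) = \frac{1}{\ln 2}(\frac1K - \frac{1}{p^2}) \le 0$), and $g'(1) = \frac{1}{\ln 2} - \frac{K-1}{K} > 0$, we get $g' > 0$ hence $g(p)\le g(1) = 0$ on $[\half,1]$. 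If $\ystart \neq y_t$ then $q \le 1-p \le \half$; because $p\mapsto p + (1-p)\frac{K-1}{K} = \frac{K-1}{K} + \frac pK$ is increasing and $p\le 1-q$, the gap is at most $\psi(q) := 1 - \frac qK + \log_2 q + \frac{(1-q)^2}{2K\ln 2}$, and the same style of argument ($\psi$ concave on $(0,1]$, $\psi'$ decreasing there with $\psi'(\half) = -\frac1K + \frac{2}{\ln 2} - \frac{1}{2K\ln 2} > 0$, and $\psi(\half) = \frac1K(-\half + \frac{1}{8\ln 2}) < 0$) gives $\psi(q)\le 0$ on $(0,\half]$. In every case the surrogate gap is $\le 0$, and Lemma~\ref{lem: surrogate gap} closes the argument.

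I expect the delicate case to be $\ystart \neq y_t$ with $p$ near $\half$: the learner still puts probability $1-a_t = p \approx \half$ on a wrong label, while the current iterate's logistic loss $-\log_2 q$ can be as small as $1$, so the bound sits right at the borderline. Making it work requires simultaneously using the constraint $q \le 1-p$ to tame the mistake term, the \emph{quadratic} gradient bound $\|\g_t\|^2 = O((1-q)^2)$ rather than the crude $O(1-q)$, and the $\frac1K$ slack between $a_t\frac{K-1}{K}$ and $a_t$. Everything else — the monotonicity/concavity checks for $g$ and $\psi$ and the final substitution of $\eta$ — is routine.
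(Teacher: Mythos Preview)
Your argument is correct, and the overall architecture---apply Lemma~\ref{lem: surrogate gap} with $\gamma=0$ and verify the surrogate gap is nonpositive in each of the three regimes $p<\tfrac12$, $p\ge\tfrac12$ with $\ystart=y_t$, $p\ge\tfrac12$ with $\ystart\neq y_t$---matches the paper exactly. The difference lies in the two technical ingredients.

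For the gradient term, the paper invokes Pinsker's inequality (stated separately as Lemma~\ref{lem: pinsker log loss}) to get $\|\g_t\|^2 \le \tfrac{2}{\ln 2}\|\x_t\|^2\,\ell_t(\W_t)$, i.e.\ a bound \emph{linear} in $-\log_2 q$; you instead bound $\sum_{k\neq y_t}\sigma_k^2 \le (1-q)^2$ directly, obtaining the tighter quadratic estimate $\|\g_t\|^2 \le \tfrac{2X^2}{(\ln 2)^2}(1-q)^2$. Your bound is strictly sharper (since $(1-q)^2 \le -\ln q$ on $(0,1]$) and avoids the external lemma, at the cost of a slightly less ``modular'' statement. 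For the case analysis, the paper relies on three ad hoc logarithmic inequalities ($1\le-\log_2 x$ on $[0,\tfrac12]$; $x\le-\tfrac12\log_2(1-x)$ and $1-x\le-\tfrac12\log_2(1-x)$ on $[\tfrac12,1]$; $1-x\le-\log_2 x$) to reduce every case to a coefficient of $\log_2 q$ that is checked by inspection; you instead study the single-variable functions $g(p)$ and $\psi(q)$ via concavity and an endpoint value. Your route is more systematic and self-contained; the paper's is quicker once one accepts Pinsker and the log inequalities. Either way the constants and the final bound are identical.
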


Let us compare the mistake bound of \textproc{Gaptron} with other results for logistic loss. \citet{foster2018logistic} circumvent a lower bound for online logistic regression by \citet{hazan2014logistic}  by using an improper learning algorithm and achieve $O(dK\ln(DT + 1))$ regret. Unfortunately this algorithm is impractical since the running time can be of $O(D^6\max\{dK, T\}^{12} T)$. In the case where $K = 2$ \citet{jezequel2020efficient} provide a faster improper learning algorithm called \textproc{Aioli} based on the Vovk-Azoury-Warmuth forecaster \citep{vovk2001competitive, azoury2001relative} that has running time $O(d^2T)$ and a regret of $O(dD\ln(T))$. Unfortunately it is not known if \textproc{Aioli} can be extended to $K > 2$. An alternative algorithm is ONS, which has running time $O((dK)^2 T)$ but a regret bound of $O(\exp(D)dK \ln(T+1))$. With standard OGD we could degrade the dependence on $T$ to improve the dependence on $D$ to find a regret of $O(D\sqrt{T})$ with an algorithm that has running time $O(dKT)$. Depending on $\|\U\|^2$ the regret of \textproc{Gaptron} can be significantly smaller than the regret of the aforementioned algorithms as the regret of \textproc{Gaptron} is independent of $T$ and $d$. Furthermore, since \textproc{Gaptron} uses OGD to update $\W_t$ the running time is $O(dKT)$, significantly improving upon the running time of previous algorithms with comparable mistake bounds.

\subsection{Multiclass Hinge Loss}\label{sec: full hinge}

We use a variant of the multiclass hinge loss of \citet{crammer2001algorithmic}, which is defined as: 
\begin{equation}\label{eq:multiclass hinge}
    \ell_t(\W) = 
    \begin{cases}
        \max\{1-m_t(\W, y_t), 0\} & \text{ if } \mstart \leq \beta  \\
        \max\{1-m_t(\W, y_t), 0\} & \text{ if } \ystart \not = y_t \text{ and } \mstart > \beta  \\
        0 & \text{ if } \ystart = y_t \text{ and } \mstart > \beta,
    \end{cases}
\end{equation}
where $m_t(\W_t, y) = \inner{\W_t^y}{\x_t} - \max_{k \not= y} \inner{\W_t^k}{\x_t}$ and $\mstart = \max_k m_t(\W_t, k)$. Note that we set $\ell_t(\W) = 0$ when $\ystart = y_t \text{ and } \mstart > \beta$. In common implementations of the \textproc{Perceptron} $\ell_t(\W) = 0$ whenever $\ystart = y_t$ (see for example \citet{kakade2008efficient}). However, for the surrogate gap to be bounded by zero we need $\ell_t$ to be positive whenever $a_t > 0$ otherwise there is nothing to cancel out the $a_t\frac{K-1}{K}$ term. The gap map for the hinge loss is $a(W_t, \x_t) = 1 - \max\{\id[\mstart > \beta], \mstart\}$. This means that whenever $\mstart > \beta$ the predictions of \textproc{Gaptron} are identical to the predictions of the \textproc{Perceptron}. The mistake bound of \textproc{Gaptron} for the hinge loss can be found in Theorem \ref{th: full hinge} (its proof is deferred to Appendix \ref{app:full hinge}).

\begin{theorem}\label{th: full hinge}
Set $a(\W_t, \x_t) = 1 - \max\{\id[\mstart > \beta], \mstart\}$, $\eta = \frac{1-\beta}{K X^2}$, $\gamma = 0$, and let $\ell_t$ be the multiclass hinge loss defined in \eqref{eq:multiclass hinge} with $\beta = \frac{1}{K}$. Then for any $\U \in \doma{W}$ Algorithm \ref{alg:gapjumper} satisfies
\begin{align*}
    \E\left[\sumT  \id[y_t' \not = y_t]\right] \leq & \sumT \ell_t(\U) +  \frac{K^2X^2\|\U\|^2}{2(K-1)}.
\end{align*}
\end{theorem}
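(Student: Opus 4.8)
The plan is to invoke Lemma~\ref{lem: surrogate gap} with $\gamma = 0$, so that it suffices to show the surrogate gap is bounded by $0$ pointwise, i.e.
\begin{align*}
    (1 - a_t)\id[\ystart \not = y_t] + a_t \tfrac{K-1}{K} - \ell_t(\W_t) + \tfrac{\eta}{2}\|\g_t\|^2 \leq 0
\end{align*}
for every choice of $\W_t, \x_t, y_t$, with $a_t = 1 - \max\{\id[\mstart > \beta], \mstart\}$, $\beta = \tfrac1K$, and $\eta = \tfrac{1-\beta}{KX^2}$. Once this is established, the residual term $\frac{\|\U\|^2}{2\eta}$ from Lemma~\ref{lem: surrogate gap} becomes $\frac{KX^2\|\U\|^2}{2(1-\beta)} = \frac{K^2X^2\|\U\|^2}{2(K-1)}$, giving exactly the claimed bound. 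So the entire content is the pointwise surrogate-gap inequality.

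First I would bound the gradient norm. For the multiclass hinge loss, when $\ell_t(\W_t) > 0$ the (sub)gradient is supported on rows $y_t$ and the maximizing competitor row, each of norm at most $\|\x_t\| \leq X$, so $\|\g_t\|^2 \leq 2X^2$; when $\ell_t(\W_t) = 0$ we may take $\g_t = 0$. Hence $\frac{\eta}{2}\|\g_t\|^2 \leq \frac{\eta}{2}\cdot 2X^2\,\id[\ell_t(\W_t) > 0] = \frac{1-\beta}{K}\,\id[\ell_t(\W_t) > 0]$. Then I would split into the two regimes determined by $\mstart$ versus $\beta$.

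\textbf{Case $\mstart > \beta$.} Here $a_t = 1 - \max\{1, \mstart\}$; since $\mstart > \beta$ but $\mstart$ could exceed $1$, note $\max\{\id[\mstart>\beta],\mstart\} = \max\{1,\mstart\} \geq 1$, so $a_t = 1 - \max\{1,\mstart\} \leq 0$, and since the algorithm uses $\max\{a_t,\gamma\} = \max\{a_t, 0\}$ we effectively have $a_t = 0$: the prediction is the deterministic Perceptron prediction $\ystart$. If $\ystart = y_t$ then by definition $\ell_t(\W_t) = 0$, $\id[\ystart\neq y_t]=0$, $\g_t = 0$, and the gap is $0$. If $\ystart \neq y_t$ then $\id[\ystart\neq y_t] = 1$ and $\ell_t(\W_t) = \max\{1 - m_t(\W_t,y_t),0\}$; here $m_t(\W_t,y_t) = \inner{\W_t^{y_t}}{\x_t} - \max_{k\neq y_t}\inner{\W_t^k}{\x_t} \leq 0$ because $\ystart\neq y_t$ means the argmax label beats $y_t$, so $\ell_t(\W_t) \geq 1$. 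Thus $1 - \ell_t(\W_t) + \frac{\eta}{2}\|\g_t\|^2 \leq 1 - 1 + \frac{1-\beta}{K} \cdot$ wait --- this is not $\leq 0$, so I need the sharper bound $\ell_t(\W_t) = 1 - m_t(\W_t,y_t) \geq 1 + |\max_k m_t| / \text{(something)}$; actually the clean route is: $\ell_t(\W_t) \geq 1 - m_t(\W_t, y_t)$ and, since $\mstart = \max_k m_t(\W_t,k) > \beta$ while the prediction is wrong, one shows $1 - m_t(\W_t,y_t) \geq 1 + \beta$ is \emph{not} generally true, so instead I absorb the $\frac{\eta}{2}\|\g_t\|^2$ term differently --- this is the delicate point, handled below.

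\textbf{Case $\mstart \leq \beta$.} Then $\max\{\id[\mstart>\beta],\mstart\} = \max\{0,\mstart\}$, so $a_t = 1 - \max\{0,\mstart\} = \min\{1, 1-\mstart\} = 1 - \mstart$ when $\mstart \geq 0$ and $a_t = 1$ when $\mstart < 0$ (clipped to $[0,1]$). Using $\id[\ystart\neq y_t] \leq 1$ and $\ell_t(\W_t) = \max\{1 - m_t(\W_t,y_t),0\} \geq 1 - m_t(\W_t,y_t) \geq 1 - \mstart$ (since $m_t(\W_t,y_t)\leq\mstart$), the gap is at most $(1-a_t)\cdot 1 + a_t\tfrac{K-1}{K} - (1-\mstart) + \tfrac{1-\beta}{K} = -a_t\tfrac1K + \mstart + \tfrac{1-\beta}{K}$; plugging $a_t \geq 1-\mstart$ (for $\mstart\in[0,\beta]$, so $1-\mstart \geq 1-\beta$, giving $-a_t/K \leq -(1-\mstart)/K$) yields $\leq \mstart - \tfrac{1-\mstart}{K} + \tfrac{1-\beta}{K} = \mstart(1+\tfrac1K) - \tfrac{\beta}{K} \leq \beta(1+\tfrac1K) - \tfrac\beta K = \beta = \tfrac1K$, which is again \emph{not} $\leq 0$ --- so the careful accounting of exactly when $\id[\ystart\neq y_t]$ is $1$ versus $0$, and the exact form of $a_t$ near the boundary, is essential. \textbf{The main obstacle} is precisely this tight bookkeeping: one must use that $\id[\ystart \neq y_t] = 0$ forces $m_t(\W_t,y_t) = \mstart$ and hence a smaller loss is impossible, while $\id[\ystart\neq y_t]=1$ forces $m_t(\W_t,y_t)\leq 0$ and hence $\ell_t(\W_t)\geq 1$; combining these case distinctions with the exact value $a_t = 1 - \max\{\id[\mstart>\beta],\mstart\}$ and the specific tuning $\beta = 1/K$, $\eta = (1-\beta)/(KX^2)$ is what makes every term cancel. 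I would organize the proof as four sub-cases (sign of $\mstart - \beta$ times correctness of $\ystart$), verify the inequality in each using $\|\g_t\|^2 \leq 2X^2\id[\ell_t>0]$ and $\ell_t(\W_t) \geq \max\{1-\mstart, \id[\ystart\neq y_t]\}$, and finish by reading off the constant from Lemma~\ref{lem: surrogate gap}.
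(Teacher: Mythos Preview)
Your overall plan is exactly right and matches the paper: invoke Lemma~\ref{lem: surrogate gap} with $\gamma=0$, bound $\|\g_t\|^2 \leq 2\|\x_t\|^2 \leq 2X^2$ (so $\tfrac{\eta}{2}\|\g_t\|^2 \leq \eta X^2$ whenever $\ell_t(\W_t)>0$), and verify the surrogate gap is $\leq 0$ in the four sub-cases determined by $\ystart = y_t$ or not and $\mstart \lessgtr \beta$. The final constant $\tfrac{\|\U\|^2}{2\eta} = \tfrac{K^2X^2\|\U\|^2}{2(K-1)}$ is also correct.

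However, there is a genuine missing ingredient. In both sub-cases where $\ystart \neq y_t$, the bounds you propose, namely $\ell_t(\W_t) \geq 1$ and $\ell_t(\W_t) \geq 1 - \mstart$, are too weak --- your own computations already show this, leaving a positive residual of order $1/K$. What closes the gap is the stronger inequality
\[
    \ystart \neq y_t \ \Longrightarrow\ m_t(\W_t, y_t) \leq -\mstart,
    \qquad\text{equivalently}\qquad \ell_t(\W_t) = 1 - m_t(\W_t,y_t) \geq 1 + \mstart.
\]
This is a two-line computation: since $\ystart = \argmax_k \inner{\W_t^k}{\x_t}$ and $y_t \neq \ystart$, we have $\max_{k\neq y_t}\inner{\W_t^k}{\x_t} = \inner{\W_t^{\ystart}}{\x_t}$ and $\max_{k\neq\ystart}\inner{\W_t^k}{\x_t} \geq \inner{\W_t^{y_t}}{\x_t}$, whence $\mstart + m_t(\W_t,y_t) \leq \inner{\W_t^{y_t}}{\x_t} - \inner{\W_t^{y_t}}{\x_t} = 0$. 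With this in hand both stubborn cases fall out immediately:
\begin{itemize}
    \item $\ystart\neq y_t$, $\mstart > \beta$: the gap is $1 - \ell_t(\W_t) + \eta X^2 \leq m_t(\W_t,y_t) + \eta X^2 \leq -\mstart + \tfrac{1-\beta}{K} < -\beta + \tfrac{1-\beta}{K} = -\tfrac{1}{K^2} \leq 0$.
    \item $\ystart\neq y_t$, $\mstart \leq \beta$: with $a_t = 1-\mstart$ the gap is $\mstart + (1-\mstart)\tfrac{K-1}{K} - (1 - m_t(\W_t,y_t)) + \eta X^2 \leq (\mstart + m_t(\W_t,y_t)) - \tfrac{1}{K} + \eta X^2 \leq -\tfrac{1}{K} + \tfrac{1-\beta}{K} \leq 0$.
\end{itemize}
So your four-case scaffold is correct, but replace the lower bound $\ell_t(\W_t)\geq\max\{1-\mstart,\id[\ystart\neq y_t]\}$ by $\ell_t(\W_t)\geq 1+\mstart$ in the $\ystart\neq y_t$ cases; that single inequality is the piece you were missing.
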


Let us compare the mistake bound of \textproc{Gaptron} with the mistake bound of the \textproc{Perceptron}. The \textproc{Perceptron} guarantees $M_T \leq \sumT \ell_t(\U) + X^2\|\U\|^2 + 2X\|\U\|\sqrt{2\sumT \ell_t(\U)}$ (see \citet{beygelzimer2017efficient} for a proof). The factor $K$ in the regret of \textproc{Gaptron} is due to the cost of exploring uniformly at random. For small $K$ the mistake bound of \textproc{Gaptron} can be significantly smaller in the adversarial case, but for large $K$ the cost of sampling uniformly at random can be too high and the mistake bound of \textproc{Gaptron} can be larger than that of the \textproc{Perceptron}. In the separable case the \textproc{Perceptron} has a strictly better guarantee for any $K$ since then only the $X^2\|\U\|^2$ term remains. 

\citet{orabona2012beyond} show that for all loss functions of the form $\ell_t(\W) = \max\{1 - \frac{2}{2-\kappa}m_t(W, y_t) + \frac{\kappa}{2 - \kappa}m_t(W, y_t)^2, 0\}$ the second-order \textproc{Perceptron} guarantees $M_T \leq \sumT \ell_t(\U) + O(\frac{\kappa}{2-\kappa} X^2\|\U\|^2 + \frac{d K}{\kappa(2-\kappa)} \ln(\sumT \ell_t(\U) + 1)$. Thus, for small $K$ \textproc{Gaptron} always has a smaller regret term but for larger $K$ the guarantee of \textproc{Gaptron} can be worse, although this also depends on the performance and norm of the comparator $\U$.

\subsection{Smooth Multiclass Hinge Loss} \label{sec: full smooth hinge}
The smooth multiclass hinge loss \citep{rennie2005loss} is defined as
\begin{equation}\label{eq:multiclass smooth hinge}
    \ell_t(\W) = 
    \begin{cases}
        \max\{1 - 2 m_t(\W, y_t), 0\} & \text{ if }  m_t(\W, y_t) \leq 0  \\
        \max\{(1 - m_t(\W, y_t))^2, 0\} & \text{ if } m_t(\W, y_t) > 0,
        \end{cases}
\end{equation}
where $m_t(\W_t, y) = \inner{\W_t^y}{\x_t} - \max_{k \not= y} \inner{\W_t^k}{\x_t}$ as in Section \ref{sec: full smooth hinge}. This loss function is not exp-concave nor is it strongly-convex. This means that with standard methods from Online Convex Optimization we cannot hope to achieve a better regret bound than $O(D\sqrt{T})$ in the worst-case. Theorem \ref{th: full smooth hinge} shows that with gap map $a(\W_t, x_t) = (1 - \min\{1, \mstart\})^2$, where $\mstart = \max_k m_t(\W_t, k)$, \textproc{Gaptron} has an $O(K)$ regret bound. The proof of Theorem \ref{th: full smooth hinge} follows from bounding the surrogate gap by zero and can be found in Appendix \ref{app:full smooth hinge}.

\begin{theorem}\label{th: full smooth hinge}
Set $a(\W_t, \x_t) = (1 - \min\{1, \mstart\})^2$, $\eta = \frac{1}{4 K X^2}$, $\gamma = 0$, and let $\ell_t$ be the smooth multiclass hinge loss defined in \eqref{eq:multiclass smooth hinge}. Then for any $\U \in \doma{W}$ Algorithm \ref{alg:gapjumper} satisfies
\begin{align*}
    \E\left[\sumT  \id[y_t' \not = y_t]\right] \leq & \sumT \ell_t(\U)  + 2 K X^2\|\U\|^2.
\end{align*}
\end{theorem}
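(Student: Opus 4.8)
The plan is to invoke Lemma~\ref{lem: surrogate gap} with $\gamma = 0$, so that the mistake bound reduces to $\sumT \ell_t(\U) + \frac{\|\U\|^2}{2\eta}$ plus the sum of the surrogate gaps. With $\eta = \frac{1}{4KX^2}$ the term $\frac{\|\U\|^2}{2\eta}$ equals exactly $2KX^2\|\U\|^2$, which is precisely the stated regret term. So the entire content of the theorem is showing that, for the smooth multiclass hinge loss and the gap map $a_t = (1 - \min\{1,\mstart\})^2$, the surrogate gap
\[
    (1 - a_t)\id[\ystart \not = y_t] + a_t\tfrac{K-1}{K} - \ell_t(\W_t) + \tfrac{\eta}{2}\|\g_t\|^2
\]
is bounded by $0$ deterministically (hence in expectation) for every $t$, every $\W_t \in \doma{W}$, and every $\x_t$ with $\|\x_t\| \le X$. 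Here $\mstart = \max_k m_t(\W_t,k)$ and $m_t(\W_t,y) = \inner{\W_t^y}{\x_t} - \max_{k \neq y}\inner{\W_t^k}{\x_t}$.

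First I would compute $\|\g_t\|^2$ for the smooth hinge loss. The gradient of $\ell_t$ only involves rows $\W_t^{y_t}$ and $\W_t^{k'}$ where $k' = \argmax_{k\neq y_t}\inner{\W_t^k}{\x_t}$, each contributing $\pm c\,\x_t$ where $c = 2$ when $m_t(\W_t,y_t) \le 0$ and $c = 2(1 - m_t(\W_t,y_t))$ when $0 < m_t(\W_t,y_t) < 1$ (and $c=0$ beyond), so $\|\g_t\|^2 = 2c^2\|\x_t\|^2 \le 2c^2 X^2$. With $\eta = \frac{1}{4KX^2}$ this gives $\frac{\eta}{2}\|\g_t\|^2 \le \frac{c^2}{4K}$, which is at most $\frac{1}{K}$ in all cases. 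The key structural observation to exploit is that $\ystart = \argmax_k \inner{\W_t^k}{\x_t}$, so $\ystart = y_t$ iff $m_t(\W_t,y_t) > 0$ iff $\mstart = m_t(\W_t,y_t)$, while $\ystart \neq y_t$ iff $m_t(\W_t,y_t) \le 0$; and in the latter case $m_t(\W_t, y_t) \le 0 \le \mstart$, and also $a_t = 1$ precisely when $\mstart \le 0$.

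Then I would split into two cases. \emph{Case 1: $\ystart = y_t$}, equivalently $m_t(\W_t,y_t) > 0$. The indicator term vanishes, so the surrogate gap is $a_t\frac{K-1}{K} - \ell_t(\W_t) + \frac{\eta}{2}\|\g_t\|^2$. Writing $m := m_t(\W_t,y_t) \in (0,\infty)$, we have $a_t = (1-\min\{1,m\})^2$ and $\ell_t(\W_t) = (1-\min\{1,m\})^2$ when $m < 1$ (using $(1-m)^2 = \max\{(1-m)^2,0\}$) and $0$ otherwise — so $a_t = \ell_t(\W_t)$ in subcase $m \ge 1$ and $a_t \le \ell_t(\W_t)$... actually $a_t = (1-\min\{1,m\})^2 = \ell_t(\W_t)$ exactly when $0<m<1$ too, since there $\ell_t = (1-m)^2$. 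So $a_t = \ell_t(\W_t)$ throughout Case 1, and the surrogate gap is $\ell_t(\W_t)(\frac{K-1}{K} - 1) + \frac{\eta}{2}\|\g_t\|^2 = -\frac{1}{K}\ell_t(\W_t) + \frac{\eta}{2}\|\g_t\|^2$. Using $\ell_t(\W_t) = (1-\min\{1,m\})^2$ and $\|\g_t\|^2 \le 2\cdot 4(1-\min\{1,m\})^2 X^2 = 8(1-\min\{1,m\})^2 X^2$ (when $m<1$; when $m\ge1$ both terms are $0$), we get surrogate gap $\le -\frac{1}{K}(1-\min\{1,m\})^2 + \frac{1}{8KX^2}\cdot 8(1-\min\{1,m\})^2X^2 = 0$. \emph{Case 2: $\ystart \neq y_t$}, equivalently $m := m_t(\W_t,y_t) \le 0$. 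Now the indicator is $1$, so the surrogate gap is $(1-a_t) + a_t\frac{K-1}{K} - \ell_t(\W_t) + \frac{\eta}{2}\|\g_t\|^2 = 1 - \frac{a_t}{K} - \ell_t(\W_t) + \frac{\eta}{2}\|\g_t\|^2$. Here $\ell_t(\W_t) = \max\{1-2m,0\} = 1 - 2m \ge 1$ (since $m \le 0$), so $1 - \ell_t(\W_t) = 2m \le 0$, and $\frac{\eta}{2}\|\g_t\|^2 \le \frac{1}{K}$ since $c = 2$ here. Also $-\frac{a_t}{K} \le 0$. The delicate sub-point: I need $1 - \frac{a_t}{K} - \ell_t(\W_t) + \frac{\eta}{2}\|\g_t\|^2 \le 2m - \frac{a_t}{K} + \frac{1}{K} \le 0$, i.e. $2m + \frac{1}{K}(1 - a_t) \le 0$. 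Since $\ystart \neq y_t$ we have $\mstart = \max_k m_t(\W_t,k) \ge 0$ in general, but I must be careful: is $\mstart \le 1$ guaranteed? No — but $a_t = (1-\min\{1,\mstart\})^2 \in [0,1]$ always, so $1 - a_t \in [0,1]$, and I need $2m \le -\frac{1-a_t}{K}$. When $\mstart \ge 0$, if $\mstart \ge 1$ then $a_t = 0$ wait $a_t=(1-1)^2=0$, no: $\min\{1,\mstart\}=1$ so $a_t=0$, giving requirement $2m \le -\frac{1}{K}$, which needs $m \le -\frac{1}{2K}$ — this can fail. So the accounting must be sharper: I should not bound $\frac{\eta}{2}\|\g_t\|^2$ by $\frac{1}{K}$ crudely but track it against $a_t$ and the slack in $\ell_t(\W_t) - 1 = -2m$. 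This balancing of the three small terms $a_t\frac{K-1}{K}$ (gain from exploring), $\ell_t(\W_t) - \id[\ystart\neq y_t]$ (surrogate slack), and $\frac{\eta}{2}\|\g_t\|^2$ is exactly the heart of the argument.

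**Main obstacle.** The routine part is Case~1; the crux is Case~2 and, more precisely, getting the interplay right around the ``knee'' of the smooth hinge loss — the region where $m_t(\W_t,y_t)$ is near $0$ from below while $\mstart$ ranges over $[0,1]$. I expect the correct handling requires observing that when $\ystart \neq y_t$, one has $m_t(\W_t, \ystart) = -m_t(\W_t,y_t) = \mstart$ (for $K=2$ exactly, and an analogous inequality-with-slack for general $K$ relating $\mstart$ and $|m_t(\W_t,y_t)|$), so that $\min\{1,\mstart\} \le -m_t(\W_t,y_t)$ and hence $a_t = (1-\min\{1,\mstart\})^2 \ge (1+m)^2$ (for small $|m|$), and then $1 - \frac{a_t}{K} - \ell_t(\W_t) + \frac{\eta}{2}\|\g_t\|^2 \le 2m - \frac{(1+m)^2}{K} + \frac{1}{K} = 2m + \frac{1}{K}(1-(1+m)^2) = 2m - \frac{2m+m^2}{K} = 2m(1 - \frac{1}{K}) - \frac{m^2}{K} \le 0$ since $m \le 0$. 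Thus the real work is establishing the right relation between $\mstart$ and $m_t(\W_t,y_t)$ in the $K > 2$ case (where they need not be negatives of each other) and checking that the resulting bound still closes; this, together with carefully matching the constant $\eta = \frac{1}{4KX^2}$ to the Lipschitz-type constant $2$ of the smooth hinge loss so that $\frac{\eta}{2}\|\g_t\|^2 \le \frac{1}{K}$ with equality-achieving tightness, is where the proof must be done with care rather than by crude bounding.
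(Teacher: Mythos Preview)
Your overall plan---invoke Lemma~\ref{lem: surrogate gap} with $\gamma=0$ and show the surrogate gap is $\le 0$ deterministically by splitting on $\ystart = y_t$ versus $\ystart \neq y_t$---is exactly the paper's approach, and your Case~1 argument (where $a_t=\ell_t(\W_t)$ so the gap collapses to $-\tfrac{1}{K}\ell_t(\W_t)+\tfrac{\eta}{2}\|\g_t\|^2\le 0$) is correct and matches the paper.

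The gap in your proposal is in Case~2. The relation you are looking for between $\mstart$ and $m_t(\W_t,y_t)$ is not ``an inequality-with-slack for general $K$'': it is simply
\[
\mstart + m_t(\W_t,y_t)\;\le\;0\qquad\text{whenever }\ystart\neq y_t,
\]
and it holds for \emph{all} $K$. Indeed, since $\ystart\neq y_t$ and $\ystart$ realises the overall maximum score, $\max_{k\neq y_t}\inner{\W_t^k}{\x_t}=\inner{\W_t^{\ystart}}{\x_t}$, so
\[
\mstart + m_t(\W_t,y_t)
=\bigl[\inner{\W_t^{\ystart}}{\x_t}-\max_{k\neq\ystart}\inner{\W_t^k}{\x_t}\bigr]
+\bigl[\inner{\W_t^{y_t}}{\x_t}-\inner{\W_t^{\ystart}}{\x_t}\bigr]
=\inner{\W_t^{y_t}}{\x_t}-\max_{k\neq\ystart}\inner{\W_t^k}{\x_t}\le 0.
\]
This is the paper's key inequality (used in all three hinge-type proofs). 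Once you have it, your flagged subcase $\mstart\ge 1$ is immediate: then $a_t=0$ and $m:=m_t(\W_t,y_t)\le -\mstart\le -1$, so the surrogate gap is $\le 2m+\tfrac{1}{K}\le -2+1<0$; it does not ``fail''.

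A second, smaller issue: in the subcase $\mstart<1$ your route via $a_t\ge(1+m)^2$ breaks when $|m|>1$ (which can occur for $K>2$: e.g.\ scores $(0,-0.5,-2)$ with $y_t=3$ give $\mstart=0.5$, $m=-2$, $a_t=0.25<1=(1+m)^2$). The paper avoids this by using $\mstart+m\le 0$ directly: write the gap as
\[
2(\mstart+m)-(\mstart)^2+(1-\mstart)^2\tfrac{K-1}{K}-1+4\eta X^2
\;\le\; -(\mstart)^2+(1-\mstart)^2\tfrac{K-1}{K}-1+\tfrac{1}{K}
\;\le\; -\tfrac{1}{K}+\tfrac{1}{K}=0,
\]
which works for all $m\le 0$. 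Equivalently, in your parametrisation $2m+\tfrac{1-a_t}{K}\le -2\mstart+\tfrac{2\mstart}{K}\le 0$ using $1-a_t=2\mstart-(\mstart)^2\le 2\mstart$ and $m\le-\mstart$; no detour through $(1+m)^2$ is needed.
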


\section{Bandit Multiclass Classification}\label{sec: bandit}

In this section we will analyse \textproc{Gaptron} in the bandit multiclass classification setting. While in the full information setting the fact that \textproc{Gaptron} is a randomized algorithm can be seen as a drawback, in the adversarial bandit setting it is actually a requirement (see for example chapter 11 by \citet{lattimore2018bandit}). We will use the same gap maps as in the full information setting. The only difference is how we feed the surrogate loss to \textproc{Gaptron}. We will use the same loss functions as in the full information setting but now multiplied by $\frac{\id[y_t'= y_t]}{p_t'(y_t')}$, which is simply importance weighting. This also means that, compared to the full information setting, the gradients that OGD uses to update weight matrix $\W_t$ are multiplied by $\frac{\id[y_t'= y_t]}{p_t'(y_t')}$. To control the surrogate gap we set $\gamma > 0$, which allows us to bound the variance of the norm of the gradients. The proofs in this section follow the same structure as in the full information setting, with the notable change that we suffer increased regret due to the $\gamma \frac{K-1}{K}T$ bias term and the increased $\E[\|\g_t\|^2] = O(\frac{K}{\gamma})$ term.

The results in this section provide three new answers to the open problem by \citet{abernethy2009efficient}, who posed the problem of obtaining an efficient algorithm with $O(K\sqrt{T})$ regret. Several solutions with various loss function have been proposed. \citet{beygelzimer2017efficient} solved the open problem using an algorithm called \textproc{Soba}. \textproc{Soba} is a second-order algorithm which is analysed using a family of surrogate loss functions introduced by \citet{orabona2012beyond} ranging from the standard multiclass hinge loss to the squared multiclass hinge loss. The loss functions are parameterized by $\kappa$, where $\kappa = 0$ corresponds to the multiclass hinge loss and $\kappa = 1$ corresponds to the squared hinge loss. Simultaneously for all surrogate loss functions in the family of loss functions \textproc{Soba} suffers a regret of $O(\|\U\|^2X^2 + \frac{K}{\kappa}\sqrt{dT\ln(T+1)})$ and has a running time of order $O((dK)^2T)$. \citet{hazan2011newtron} consider the logistic loss and obtain regret of $O(dK^3\min\{\exp(DX)\ln(T+1), DXT^{\frac{2}{3}}\})$. \citet{hazan2011newtron} also obtain $DX\sqrt{T}$ regret for a variant of the logistic loss function we consider in this paper. Both results of \citet{hazan2011newtron} are obtained by running ONS on (a variant of) the logistic loss, which has running time $O((dK)^2 T)$. \citet{crammer2009adaptive} assume that a particular probabilistic model generates $y_t$ and obtain $O(DK\sqrt{dT}\log(T))$ expected regret bound with high probability, but for a sharper notion of regret. \citet{foster2018logistic} introduce \textproc{Obama}, which improves the results of \citet{hazan2011newtron} and suffers $O(\min\{dK^2\ln\left(TDX +1 \right), K\sqrt{dT\ln\left(TDX +1 \right)}\})$ regret for the logistic loss. Unfortunately, \textproc{Obama} has running time $O(D^6\max\{dK, T\}^{12} T)$. 

\textproc{Gaptron} is the first $O(dKT)$ running time algorithm which has $O(DK\sqrt{T})$ regret in bandit multiclass classification with respect to the logistic, hinge, or smooth hinge loss. \textproc{Gaptron} also improves the regret bounds of previous algorithms with $O(DK\sqrt{T})$ regret by a factor $O(\sqrt{d\log(T+1)})$. The remainder of this section provides the settings for \textproc{Gaptron} to achieve these results, starting with the logistic loss. 

\subsection{Bandit Logistic Loss}\label{sec: bandit logistic}

The bandit version of the logistic loss is defined as:
\begin{align}\label{eq:bandit logistic loss}
    \ell_t(\W) = -\frac{\id[y_t'= y_t]}{p_t'(y_t')}\log_2(\sigma(\W, \x_t, y_t)).
\end{align}
A similar definition of the bandit logistic loss is used by \citet{hazan2011newtron, foster2018logistic}. It is straightforward to verify that $\E_t[\ell_t(\w)]$ is equivalent to its full information counterpart \eqref{eq:full logistic loss}. This loss is a factor $\frac{1}{\ln(2)}$ larger than the loss used by \citet{hazan2011newtron, foster2018logistic}, who use the natural logarithm instead of the logarithm with base 2. To stay consistent with the full information setting we opt to use base 2 in the bandit setting. Using \textproc{Gaptron} with the natural logarithm will give similar results.

The mistake bound of \textproc{Gaptron} for this loss can be found in Theorem \ref{th:bandit logistic} (its proof can be found in Appendix \ref{app:bandit logistic}). Compared to \textproc{Obama}, which achieves a regret bound of $O(\min\{dK^2\ln\left(TDX +1 \right), K\sqrt{dT\ln\left(TDX +1 \right)}\})$, \textproc{Gaptron} has a larger dependency on $D$ and $X$. However, the mistake bound of \textproc{Gaptron} does not depend on $d$, which can be a significant improvement over the regret bound of \textproc{Obama}. Theorem \ref{th:bandit logistic} answers the two questions by \citet{hazan2011newtron} affirmatively; \textproc{Gaptron} is a linear time algorithm with exponentialy improved constants in the regret bound compared to \textproc{Newtron}.

\begin{theorem}\label{th:bandit logistic}
Let $a(\W_t, \x_t) = 1- \id[\ptstar \geq 0.5]\ptstar$, $\eta = \frac{\ln(2)((1-\gamma)\exp(- 2 D X)\frac{1}{K} + \gamma)}{2 K^2 X^2}$, and let $\ell_t$ be the bandit logistic loss \eqref{eq:bandit logistic loss}. Then there exists a setting of $\gamma$ such that Algorithm \ref{alg:gapjumper} satisfies
\begin{align*}
    \E\left[\sumT  \id[y_t' \not = y_t]\right] \leq & \Eb{\sumT \ell_t(\U)} +  K X D\min\left\{\max\left\{\frac{2K X D}{\ln(2)}, 2  \sqrt{\frac{T}{\ln(2)}}\right\}, \frac{K X D}{e^{- 2 D X}\ln(2)} \right\}.
\end{align*}
\end{theorem}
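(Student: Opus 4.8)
The plan is to invoke Lemma~\ref{lem: surrogate gap} with the bandit logistic loss \eqref{eq:bandit logistic loss} and then reduce everything to two tasks: (i) showing the surrogate gap term is nonpositive in conditional expectation given the choice of $\gamma$ and $\eta$, exactly as in the full information proof of Theorem~\ref{th:full logistic}, but now accounting for the importance-weighted gradients; and (ii) optimizing over $\gamma$ to produce the stated $\min\{\max\{\cdot,\cdot\},\cdot\}$ bound. Taking $\E_t[\cdot]$, the loss estimator is unbiased, $\E_t[\ell_t(\W)] = -\log_2(\sigma(\W,\x_t,y_t))$, so the comparator term collapses to $\E[\sumT \ell_t(\U)]$ and the leading $\|\U\|^2/(2\eta)$ term is unchanged. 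What changes relative to the full information case is the $\frac{\eta}{2}\|\g_t\|^2$ contribution inside the surrogate gap and the presence of the $\gamma\frac{K-1}{K}T$ bias term.

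First I would bound $\E_t\|\g_t\|^2$. Since $\g_t = \frac{\id[y_t'=y_t]}{p_t'(y_t')}\nabla(-\log_2\sigma(\W_t,\x_t,y_t))$, the importance weight is at most $K/\gamma$ when $a_t \le \gamma$, but more carefully $p_t'(y_t') \ge (1-\gamma)\ptstar\id[\ptstar\ge 0.5]\cdot\frac{?}{} + \frac{\gamma}{K}$ — the key point is that $p_t'(k) \ge (1-\max\{a_t,\gamma\})\id[k=\ystart] + \frac{\max\{a_t,\gamma\}}{K} \ge (1-\gamma)e^{-2DX}\frac1K\id[\ptstar\ge 0.5] + \frac{\gamma}{K}$ for the relevant label, using that $\sigma(\W_t,\x_t,k)\ge e^{-2DX}/K$ when $\|\W_t\|\le D$, $\|\x_t\|\le X$. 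The gradient of the full-information logistic loss has squared Frobenius norm at most $\frac{4X^2}{\ln^2 2}$ (the standard softmax gradient bound $\|\nabla\ell_t(\W_t)\|\le \frac{2X}{\ln 2}$), so $\E_t\|\g_t\|^2 \le \frac{4X^2}{\ln^2 2}\cdot\frac{K}{(1-\gamma)e^{-2DX} + \gamma K}$. Then I would verify that with $\eta = \frac{\ln(2)((1-\gamma)e^{-2DX}\frac1K + \gamma)}{2K^2X^2}$ the same gap-map argument as in Theorem~\ref{th:full logistic} goes through: when $\ptstar < 0.5$ we have $a_t = 1$, the learner plays uniformly, the expected zero-one loss is $\frac{K-1}{K} \le 1 \le -\log_2\ptstar = \ell_t(\W_t)$ in conditional expectation, so the gap is $\le \frac{\eta}{2}\E_t\|\g_t\|^2$ which the choice of $\eta$ kills; when $\ptstar \ge 0.5$, $a_t = 1-\ptstar$ and the telescoping identity $(1-a_t)\id[\ystart\neq y_t] + a_t\frac{K-1}{K} - \ell_t(\W_t) \le 0$ holds by the same computation as the full-information case (here I would reuse: $\ptstar\id[\ystart\neq y_t] + (1-\ptstar)\frac{K-1}{K} \le -\log_2\ptstar$, which is a one-variable inequality on $[0.5,1]$). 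Thus the bracketed surrogate-gap sum is $\le 0$ and Lemma~\ref{lem: surrogate gap} leaves $\E[\sumT\ell_t(\U)] + \frac{\|\U\|^2}{2\eta} + \gamma\frac{K-1}{K}T$.

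Finally I would plug in $\eta$ to get $\frac{\|\U\|^2}{2\eta} = \frac{K^2X^2\|\U\|^2}{\ln(2)((1-\gamma)e^{-2DX}\frac1K+\gamma)}$, use $\|\U\|\le D$, and balance the two $\gamma$-dependent terms. There are two regimes: either we can afford $\gamma>0$ and pick it to equate $\frac{K^2X^2D^2}{\ln(2)\gamma}$-type term with $\gamma T$, giving the $\sqrt{T/\ln 2}$ branch (with a $\max$ against the $\gamma\to$ boundary term $\frac{2KXD}{\ln 2}$ when $T$ is small), or $T$ is so large that even $\gamma=0$ is fine because the $e^{-2DX}/K$ floor already controls $\frac{\|\U\|^2}{2\eta}$, giving the $\frac{KXD}{e^{-2DX}\ln 2}$ branch; the outer $\min$ picks whichever is smaller. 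I would carry this out by case analysis on $T$ versus a threshold like $e^{-2DX}/K^{?}$ and choosing $\gamma = \min\{1, \sqrt{KXD/(something\cdot T)}\}$ or $\gamma$ solving a quadratic; the choice must keep $\gamma\in[0,1]$, which is where the $\max$ and the third branch come from. I expect the main obstacle to be exactly this tuning bookkeeping — tracking the $(1-\gamma)e^{-2DX}\frac1K + \gamma$ denominator through the optimization and showing the resulting closed form matches the stated $KXD\min\{\max\{\frac{2KXD}{\ln 2}, 2\sqrt{T/\ln 2}\}, \frac{KXD}{e^{-2DX}\ln 2}\}$ — rather than the gap-nonpositivity step, which is a routine adaptation of the full-information argument together with the variance bound on the importance-weighted gradient.
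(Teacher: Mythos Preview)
Your overall strategy --- apply Lemma~\ref{lem: surrogate gap}, show the surrogate gap is nonpositive in conditional expectation, then tune $\gamma$ --- is exactly the paper's approach. However, there is a genuine gap in the ``gap-nonpositivity'' step, which is \emph{not} the routine adaptation you expect.

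The problem is your gradient bound. You use the \emph{constant} estimate $\|\nabla\ell_t^{\mathrm{full}}(\W_t)\|^2 \le \tfrac{4X^2}{\ln^2 2}$ and then argue that the non-gradient part $(1-a_t)\id[\ystart\neq y_t] + a_t\tfrac{K-1}{K} - \ell_t(\W_t)$ is already $\le 0$, so that ``the choice of $\eta$ kills'' the remaining gradient contribution. But that contribution is a fixed positive constant (of order $1/(K\ln 2)$ with the stated $\eta$), whereas the slack in the non-gradient terms can be arbitrarily small. Concretely, take $\ystart = y_t$ with $\ptstar \ge 0.5$: the non-gradient part equals $(1-\ptstar)\tfrac{K-1}{K} + \log_2 \ptstar$, which tends to $0$ as $\ptstar \to 1$ and therefore cannot absorb any fixed positive constant. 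The paper avoids this via the \emph{self-bounding} Pinsker inequality (Lemma~\ref{lem: pinsker log loss}),
\[
\|\nabla\ell_t^{\mathrm{full}}(\W_t)\|^2 \;\le\; \tfrac{2}{\ln 2}\,\|\x_t\|^2\,\ell_t(\W_t),
\]
so that after importance weighting the gradient term becomes $-\tfrac{\eta X^2}{\ln(2)\,p_t'(y_t)}\log_2(\tilde p_t(y_t))$, i.e.\ proportional to the loss itself. Both sides then scale with $-\log_2(\tilde p_t(y_t))$ and the choice $\eta = \tfrac{\ln(2)\delta}{2KX^2}$ makes the combined coefficient nonpositive. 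This self-bounding step is the key device; a uniform gradient bound does not suffice.

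A second, smaller issue: the one-variable inequality you cite, $\ptstar\,\id[\ystart\neq y_t] + (1-\ptstar)\tfrac{K-1}{K} \le -\log_2\ptstar$, is false on $[0.5,1]$ (let $\ptstar\to 1$). When $\ystart\neq y_t$ the loss is $-\log_2 \tilde p_t(y_t)$, not $-\log_2\ptstar$; the paper uses $\tilde p_t(y_t) \le 1-\ptstar$ together with $x \le -\tfrac12\log_2(1-x)$ and $1-x \le -\tfrac12\log_2(1-x)$ on $[0.5,1]$, again merged with the Pinsker-based gradient term. Once you replace your constant gradient bound by Lemma~\ref{lem: pinsker log loss} and lower-bound $p_t'(y_t)$ by $\delta$, the three-case analysis does go through as in Theorem~\ref{th:full logistic}, and your plan for tuning $\gamma$ (taking $\gamma=0$ for the $e^{2DX}$ branch and $\gamma=\min\{1,\sqrt{K^2X^2D^2/(\ln(2)T)}\}$ for the other two) then matches the paper's.
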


\subsection{Bandit Multiclass Hinge Loss}\label{sec: bandit hinge}

We use the following definition of the bandit multiclass hinge loss: \begin{equation}\label{eq:bandit multiclass hinge}
    \ell_t(\W_t) = 
    \begin{cases}
        \frac{\id[y_t' = y_t]}{p_t'(y_t')} \max\{1-m_t(\W_t, y_t), 0\} & \text{ if } \mstart \leq \beta  \\
        \frac{\id[y_t' = y_t]}{p_t'(y_t')} \max\{1-m_t(\W_t, y_t), 0\} & \text{ if } \ystart \not = y_t \text{ and } \mstart > \beta  \\
        0 & \text{ if } y_t' = \ystart = y_t \text{ and } \mstart > \beta.
        \end{cases}
\end{equation}

It is straightforward to see that the conditional expectation of the bandit multiclass hinge loss is the full information multiclass hinge loss. Both the \textproc{Banditron} algorithm \citep{kakade2008efficient} and \textproc{Soba} \citep{beygelzimer2017efficient} use a similar loss function. 

As we mentioned before, \citet{beygelzimer2017efficient} present \textproc{Soba}, which is a second-order algorithm with regret  $O(\|\U\|^2X^2 + \frac{K}{\kappa}\sqrt{dT\ln(T+1)})$. \textproc{Banditron} is a first-order algorithm based on the \textproc{Perceptron} algorithm and suffers $O((KDX)^{1/3}T^{2/3})$ regret. For the more general setting of contextual bandits \citep{foster2018contextual} use continuous Exponential Weights with the hinge loss to also obtain an $O(KDX\sqrt{dT\ln(T+1)})$ regret bound with a polynomial time algorithm. 
The expected mistake bound of  \textproc{Gaptron} can be found in Theorem \ref{th: bandit hinge} and its proof can be found in Appendix \ref{app:bandit hinge}. Compared to the \textproc{Banditron} \textproc{Gaptron} has larger regret in terms of $D$, $K$, and $X$, but smaller regret in terms of $T$. Compared to the regret of \textproc{Soba} the regret of \textproc{Gaptron} does not contain a factor $\sqrt{d\ln(T+1)}$.

\begin{theorem}\label{th: bandit hinge}
Set $a(\W_t, \x_t) = 1 - \max\{\id[\mstart > \beta], \mstart\}$, $\eta = \frac{\gamma(1-\beta)}{K^2 X^2}$, $\gamma = \min\left\{1, \sqrt{\frac{K^3X^2D^2}{2(1-\beta)(K-1)T}}\right\}$, and let $\ell_t$ be the bandit multiclass hinge loss defined in \eqref{eq:bandit multiclass hinge} with $\beta = \frac{1}{K}$. Then for any $\U \in \doma{W}$ Algorithm \ref{alg:gapjumper} satisfies
\begin{align*}
    \E\left[\sumT  \id[y_t' \not = y_t]\right] \leq &  \Eb{\sumT \ell_t(\U)} + \max\left\{\frac{K^3X^2D^2}{K-1}, ~ 2K X D \sqrt{\frac{T}{2}}\right\}.
\end{align*}
\end{theorem}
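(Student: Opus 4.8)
The strategy is to apply Lemma~\ref{lem: surrogate gap} and to show that, with the prescribed $\eta$ and gap map, the surrogate gap has nonpositive conditional expectation $\E_t[\,\cdot\,]$ in every round; then the only surviving terms in the bound are $\frac{\|\U\|^2}{2\eta}$ and $\gamma\frac{K-1}{K}T$, which are balanced by the stated choice of $\gamma$. Since $a_t$, $\ystart$ and $\x_t$ are measurable with respect to the history while $y_t'$ is not, taking $\E_t$ in the surrogate gap only affects $\ell_t(\W_t)$ and $\|\g_t\|^2$. By importance weighting $\E_t[\ell_t(\W_t)]$ equals the full-information multiclass hinge loss \eqref{eq:multiclass hinge} at $\W_t$, which I denote $\ell_t^{\mathrm f}(\W_t)$. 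The bandit loss is nonzero only when $y_t'=y_t$, and there $\g_t$ is $\tfrac{1}{p_t'(y_t)}$ times a subgradient of $\W\mapsto\max\{1-m_t(\W,y_t),0\}$, whose squared Frobenius norm is at most $2\|\x_t\|^2\le 2X^2$; hence $\E_t[\|\g_t\|^2]\le\tfrac{2X^2}{p_t'(y_t)}$. Together with $p_t'(y_t)\ge\max\{a_t,\gamma\}/K\ge\gamma/K$ and $\eta=\frac{\gamma(1-\beta)}{K^2X^2}$ this gives $\frac{\eta}{2}\E_t[\|\g_t\|^2]\le\frac{1-\beta}{K}$ --- the exact constant that appears in the full-information analysis behind Theorem~\ref{th: full hinge} --- and the sharper bound $\frac{\eta}{2}\E_t[\|\g_t\|^2]\le\frac{\gamma(1-\beta)}{K(1-\mstart)}$ whenever $a_t\ge\gamma$.

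The core step is a case analysis on $\mstart$ versus $\beta=\tfrac1K$ and on whether $\ystart=y_t$, showing that the quantity $(1-a_t)\id[\ystart\neq y_t]+a_t\tfrac{K-1}{K}-\ell_t^{\mathrm f}(\W_t)+\tfrac{\eta}{2}\E_t[\|\g_t\|^2]$ is $\le 0$; this mirrors the full-information proof of Theorem~\ref{th: full hinge}. If $\mstart>\beta$ and $\ystart=y_t$, then $a_t=0$, $\ell_t^{\mathrm f}(\W_t)=0$, $\id[\ystart\neq y_t]=0$ and $\g_t\equiv 0$, so the quantity is exactly $0$. If $\ystart\neq y_t$, then $m_t(\W_t,y_t)=\inner{\W_t^{y_t}}{\x_t}-\inner{\W_t^{\ystart}}{\x_t}\le-\mstart\le 0$, so $\ell_t^{\mathrm f}(\W_t)=1-m_t(\W_t,y_t)\ge 1+\mstart$, and with $\id[\ystart\neq y_t]=1$ the quantity equals $-\tfrac{a_t}{K}-\bigl(\ell_t^{\mathrm f}(\W_t)-1\bigr)+\tfrac{\eta}{2}\E_t[\|\g_t\|^2]$; when $\mstart>\beta$ we have $a_t=0$ while $\ell_t^{\mathrm f}(\W_t)-1\ge\mstart>\beta\ge\tfrac{1-\beta}{K}$, so it is $\le 0$, and when $\mstart\le\beta$ (so $a_t=1-\mstart$) it is $\le 0$ because $\ell_t^{\mathrm f}(\W_t)\ge 1$ and $\tfrac{\eta}{2}\E_t[\|\g_t\|^2]\le\tfrac{a_t}{K}$ --- which holds via the sharper gradient bound together with $\gamma(1-\beta)\le(1-\mstart)^2$ when $a_t\ge\gamma$ (using $1-\mstart\ge\gamma$ and $1-\mstart\ge 1-\beta$), and via $\tfrac{1-\beta}{K}\le\tfrac{1-\mstart}{K}$ when $a_t<\gamma$. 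Finally, if $\mstart\le\beta$ and $\ystart=y_t$, then $a_t=1-\mstart=\ell_t^{\mathrm f}(\W_t)$ and the quantity reduces to $-\tfrac{1-\mstart}{K}+\tfrac{\eta}{2}\E_t[\|\g_t\|^2]$, which is $\le 0$ since here $p_t'(y_t)\ge\tfrac1K$, whence $\tfrac{\eta}{2}\E_t[\|\g_t\|^2]\le\tfrac{\gamma(1-\beta)}{K}\le\tfrac{1-\mstart}{K}$.

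Having established that all surrogate gaps are nonpositive, Lemma~\ref{lem: surrogate gap} with $\|\U\|\le D$ and $\beta=\tfrac1K$ gives $\E[\sumT\id[y_t'\neq y_t]]\le\E[\sumT\ell_t(\U)]+\tfrac{K^3X^2D^2}{2\gamma(K-1)}+\gamma\tfrac{K-1}{K}T$. The map $\gamma\mapsto\tfrac{c_1}{\gamma}+c_2\gamma$ with $c_1=\tfrac{K^3X^2D^2}{2(K-1)}$ and $c_2=\tfrac{K-1}{K}T$ is minimized at $\gamma^\star=\sqrt{c_1/c_2}=\tfrac{K^2XD}{(K-1)\sqrt{2T}}$, which is exactly the stated $\gamma$ truncated at $1$: if $\gamma^\star\le 1$ the value is $2\sqrt{c_1c_2}=2KXD\sqrt{T/2}$, and if $\gamma^\star>1$ we take $\gamma=1$, in which case $c_1>c_2$ and $c_1+c_2\le 2c_1=\tfrac{K^3X^2D^2}{K-1}$; hence in both cases the overhead is at most $\max\{\tfrac{K^3X^2D^2}{K-1},\,2KXD\sqrt{T/2}\}$, which is the claimed bound. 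I expect the surrogate-gap case analysis to be the main obstacle --- in particular, keeping the two regimes $a_t\ge\gamma$ and $a_t<\gamma$ separate in the gradient-variance bound, and checking that $\beta=\tfrac1K$ leaves exactly enough slack in the $\ystart\neq y_t$, $\mstart>\beta$ case.
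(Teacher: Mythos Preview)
Your proof is correct and follows essentially the same approach as the paper: show the surrogate gap is nonpositive via the same four-case split, apply Lemma~\ref{lem: surrogate gap}, and balance $\frac{D^2}{2\eta}$ against $\gamma\frac{K-1}{K}T$. The only difference is that you invoke sharper lower bounds on $p_t'(y_t)$ (namely $p_t'(y_t)\ge \max\{a_t,\gamma\}/K$ and $p_t'(y_t)\ge 1/K$ when $y_t=\ystart$) and split into the regimes $a_t\ge\gamma$ and $a_t<\gamma$, whereas the paper uses only the crude bound $p_t'(y_t)\ge\gamma/K$ throughout; your refinements are correct but unnecessary, since $\mstart\le\beta$ already gives $\frac{1-\beta}{K}\le\frac{1-\mstart}{K}$ directly.
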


\subsection{Bandit Smooth Multiclass Hinge Loss}\label{sec: bandit smooth hinge}
In this section we use the following loss function:
\begin{equation}\label{eq:bandit multiclass smooth hinge}
    \ell_t(\W) = 
    \begin{cases}
        \frac{\id[y_t' = y_t]}{p_t'(y_t)} \max\{1 - 2 m_t(\W, y_t), 0\} & \text{ if }  m_t(\W, y_t) \leq 0  \\
        \frac{\id[y_t' = y_t]}{p_t'(y_t)} \max\{(1-m_t(\W, y_t))^2, 0\} & \text{ if }  m_t(\W, y_t) > 0.
        \end{cases}
\end{equation}
This loss function is the bandit version of the smooth multiclass hinge loss that we we used in Section \ref{sec: full smooth hinge} and its expectation is equivalent to its full information counterpart in equation \eqref{eq:multiclass smooth hinge}. The regret of \textproc{Gaptron} with this loss function can be found in Theorem \ref{th: bandit smooth hinge}. The proof of Theorem \ref{th: bandit smooth hinge} can be found in Appendix \ref{app:bandit smooth hinge}. 

\begin{theorem}\label{th: bandit smooth hinge}
Set $a(\W_t, \x_t) = (1 - \min\{1, \mstart\})^2$, $\eta = \frac{\gamma}{4 K^2 X^2}$, $\gamma = \min\left\{1, \sqrt{\frac{4K^2X^2D^2}{T}}\right\}$, and let $\ell_t$ be the bandit smooth multiclass hinge loss defined in \eqref{eq:bandit multiclass smooth hinge}. Then for any $\U \in \doma{W}$ Algorithm \ref{alg:gapjumper} satisfies
\begin{align*}
    \E\left[\sumT  \id[y_t' \not = y_t]\right] \leq & \Eb{\sumT \ell_t(\U)} + \max\left\{4K^2X^2D^2, ~ 2 KXD\sqrt{2T} \right\}.
\end{align*}
\end{theorem}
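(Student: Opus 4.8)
The plan is to apply Lemma~\ref{lem: surrogate gap} and show that, for the prescribed $a$, $\eta$, and any $\gamma\in(0,1]$, the ``surrogate gap'' summand is nonpositive in conditional expectation; then only $\frac{\|\U\|^2}{2\eta}+\gamma\tfrac{K-1}{K}T$ remains, and substituting $\eta=\tfrac{\gamma}{4K^2X^2}$, $\|\U\|\le D$, and $\gamma=\min\{1,\sqrt{4K^2X^2D^2/T}\}$ yields the claimed bound (the stated choice of $\gamma$ balances $\tfrac{2K^2X^2D^2}{\gamma}$ against $\gamma\tfrac{K-1}{K}T$). In the regime $\gamma=1$, i.e.\ $T\le 4K^2X^2D^2$, the statement is immediate from $\E[\sumT\id[y_t'\neq y_t]]\le T$ together with $\E[\sumT\ell_t(\U)]\ge 0$, so I would dispose of that case first and assume $\gamma<1$ below.

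Write $\bar\ell_t$ for the full-information smooth multiclass hinge loss of \eqref{eq:multiclass smooth hinge}, so that $\ell_t=\frac{\id[y_t'=y_t]}{p_t'(y_t)}\bar\ell_t$ and $\g_t=\frac{\id[y_t'=y_t]}{p_t'(y_t)}\nabla\bar\ell_t(\W_t)$; taking $\E_t[\cdot]$ gives $\E_t[\ell_t(\W_t)]=\bar\ell_t(\W_t)$ and $\E_t[\|\g_t\|^2]=\|\nabla\bar\ell_t(\W_t)\|^2/p_t'(y_t)$. Two ingredients do the work. First, a self-bounding estimate: since $\|\nabla m_t(\W,y)\|^2=2\|\x_t\|^2\le 2X^2$, and $\bar\ell_t$ equals $1-2m_t(\W,y_t)$ on $\{m_t(\W,y_t)\le 0\}$ (where $\bar\ell_t\ge 1$) and $(1-m_t(\W,y_t))^2$ on $\{m_t(\W,y_t)>0\}$, one gets $\|\nabla\bar\ell_t(\W)\|^2\le 8X^2\bar\ell_t(\W)$. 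Second, exploration lower bounds on $p_t'(y_t)$: every label has mass at least $\tfrac{\max\{a_t,\gamma\}}{K}\ge\tfrac{\gamma}{K}$, and when $\ystart=y_t$ in addition $p_t'(y_t)=1-\max\{a_t,\gamma\}\tfrac{K-1}{K}\ge\tfrac1K$. Combined with $\eta=\tfrac{\gamma}{4K^2X^2}$ these give $\tfrac{\eta}{2}\E_t[\|\g_t\|^2]\le\tfrac{\gamma\,\bar\ell_t(\W_t)}{K\max\{a_t,\gamma\}}\le\tfrac{\bar\ell_t(\W_t)}{K}$ in general, and $\tfrac{\eta}{2}\E_t[\|\g_t\|^2]\le\tfrac{\gamma}{K}\bar\ell_t(\W_t)$ whenever $\ystart=y_t$.

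I would then split on whether $\ystart=y_t$. If $\ystart=y_t$, then $\id[\ystart\neq y_t]=0$ and $m_t(\W_t,y_t)=\mstart\ge 0$, so $\bar\ell_t(\W_t)=(1-\mstart)^2$ and $a_t=(1-\min\{1,\mstart\})^2\le\bar\ell_t(\W_t)$; plugging in, the conditional surrogate gap is at most $a_t\tfrac{K-1}{K}-\bar\ell_t(\W_t)+\tfrac{\gamma}{K}\bar\ell_t(\W_t)\le\bar\ell_t(\W_t)\bigl(\tfrac{K-1}{K}-1+\tfrac{\gamma}{K}\bigr)=\tfrac{\gamma-1}{K}\bar\ell_t(\W_t)\le0$. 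The delicate case is $\ystart\neq y_t$: then $\id[\ystart\neq y_t]=1$ and, since $\ystart$ is the global arg-max and $y_t\neq\ystart$ is among the remaining classes, $m_t(\W_t,y_t)=\inner{\W_t^{y_t}}{\x_t}-\inner{\W_t^{\ystart}}{\x_t}=:-\mu\le 0$, whence $\bar\ell_t(\W_t)=1+2\mu$ and $(1-a_t)+a_t\tfrac{K-1}{K}=1-\tfrac{a_t}{K}$; using $\max\{a_t,\gamma\}\ge\gamma$ to bound the variance term by $\tfrac{\bar\ell_t(\W_t)}{K}$, the conditional surrogate gap is at most $1-\tfrac{a_t}{K}-\bar\ell_t(\W_t)\bigl(1-\tfrac1K\bigr)=\tfrac{1-a_t}{K}-\tfrac{2\mu(K-1)}{K}$. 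The crux is the geometric bound $\mstart=\inner{\W_t^{\ystart}}{\x_t}-\max_{k\neq\ystart}\inner{\W_t^k}{\x_t}\le\inner{\W_t^{\ystart}}{\x_t}-\inner{\W_t^{y_t}}{\x_t}=\mu$, which forces $a_t=(1-\min\{1,\mstart\})^2\ge(1-\min\{1,\mu\})^2\ge 1-2\mu$; hence $\tfrac{1-a_t}{K}\le\tfrac{2\mu}{K}\le\tfrac{2\mu(K-1)}{K}$ for $K\ge 2$ ($K=1$ being trivial), so the surrogate gap is nonpositive.

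The main obstacle is precisely this last step: one must verify that the gap map $a_t$, which is a function of the \emph{winning} margin $\mstart$, is simultaneously large enough to cancel the exploitation term $(1-a_t)\id[\ystart\neq y_t]$ and to pay for the importance-weighting variance $\tfrac{\eta}{2}\E_t[\|\g_t\|^2]$ when the learner's top label is wrong; the inequality $\mstart\le\mu$ relating the winning margin to the true-label margin is exactly what makes this work. Everything after that is bookkeeping: with the surrogate gap nonpositive, Lemma~\ref{lem: surrogate gap} collapses the bound to $\E[\sumT\ell_t(\U)]+\tfrac{2K^2X^2D^2}{\gamma}+\gamma\tfrac{K-1}{K}T$, and the stated $\gamma$ (with the separate trivial treatment of $\gamma=1$) delivers the advertised $\max\{4K^2X^2D^2,\,2KXD\sqrt{2T}\}$.
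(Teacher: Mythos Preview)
Your proposal is correct and follows essentially the same route as the paper: show the surrogate gap is nonpositive in conditional expectation and then invoke Lemma~\ref{lem: surrogate gap}, with the key geometric step being exactly the paper's inequality $\mstart+m_t(\W_t,y_t)\le 0$ (your $\mstart\le\mu$). The only cosmetic difference is that the paper computes the gradient explicitly and treats four cases (splitting also on $\mstart<1$ versus $\mstart\ge1$), whereas you package the variance via the self-bounding estimate $\|\nabla\bar\ell_t(\W)\|^2\le 8X^2\bar\ell_t(\W)$ and handle both $\mstart$-regimes at once; this is a clean shortcut but not a different idea. One small remark on the final bookkeeping: the paper itself uses $\gamma=\min\{1,\sqrt{2K^2X^2D^2/T}\}$ in its proof (after relaxing $\gamma\tfrac{K-1}{K}T\le\gamma T$) to obtain $2KXD\sqrt{2T}$, which differs by a $\sqrt{2}$ from the $\gamma$ printed in the theorem statement; either way your reduction to $\tfrac{2K^2X^2D^2}{\gamma}+\gamma\tfrac{K-1}{K}T$ is the right endpoint.
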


\section{Conclusion}
In this paper we introduced \textproc{Gaptron}, a randomized first-order algorithm for the full and bandit information multiclass classification settings. Using a new technique we showed that \textproc{Gaptron} has an $O(K)$ regret bound in the full information setting and a regret bound of $O(K\sqrt{T})$ in the bandit setting. 
One of the main drawbacks of \textproc{Gaptron} is that it is a randomized algorithm. Our bounds only hold in expectation and it would be interesting to show similar bounds also hold with high probability. Another interesting venue to explore is how to extend the ideas in this paper to the stochastic setting or the more general contextual bandit setting. In future work we would like to conduct experiments to compare \textproc{Gaptron} with other algorithms, particularly in the bandit setting. Finally, as the results of \citet{beygelzimer2019bandit} show, in the separable bandit setting \textproc{Gaptron} does not obtain the optimal regret bound. Deriving an algorithm  that both has $O(K\sqrt{T})$ regret in the adversarial bandit setting and $O(K)$ regret in the separable bandit setting is also an interesting direction to pursue. 

\paragraph{Acknowledgements}
The author would like to thank Tim van Erven and Sarah Sachs for their comments on an earlier version of this paper. The author would like to thank Francesco Orabona for pointing out two related references: \citet{crammer2013multiclass} and \citet{beygelzimer2019bandit}. The author was supported by the Netherlands Organization for Scientific Research (NWO grant
TOP2EW.15.211).

\bibliography{myBib}

\clearpage
\appendix

\section{Table with Different Settings of \textproc{Gaptron}}
\label{app:settings}

\begin{table}[h]
\caption{Settings of Gaptron}
\label{tab:settings}
\centering
\resizebox{\textwidth}{!}{\begin{tabular}{lcccc}
  \hline
Surrogate Loss & Gap map $a$ & Learning rate $\eta$ & Exploration $\gamma$ & Regret \\ 
  \hline
logistic \eqref{eq:full logistic loss} & $1 - \id[\ptstar \geq 0.5]\ptstar$ & $\frac{\ln(2)}{2 K X^2}$ & 0 & $\frac{K X^2 \|\U\|^2}{\ln(2)}$\\
bandit logistic \eqref{eq:bandit logistic loss} & $1 - \id[\ptstar \geq 0.5]\ptstar$ & $\frac{\ln(2) \exp(-2DX)}{2 K^2 X^2}$ & 0 & $\frac{\exp(2 D X)K^2 X^2 D^2}{\ln(2)}$ \\
bandit logistic \eqref{eq:bandit logistic loss} & $1 - \id[\ptstar \geq 0.5]\ptstar$ & $\frac{\gamma \ln(2)}{2 K^2 X^2 }$ & $\sqrt{\frac{2K^2X^2}{T}}$ & $2KXD \sqrt{\frac{T}{\ln(2)}}$ \\
hinge \eqref{eq:multiclass hinge} & $1 - \max\{\id[\mstart > \beta]\}$ & $\frac{K-1}{K^2 X^2}$ & 0 & $\frac{K^2X^2\|\U\|^2}{2(K-1)}$ \\
bandit hinge \eqref{eq:bandit multiclass hinge} & $1 - \max\{\id[\mstart > \beta]\}$ & $\frac{\gamma(K-1)}{K^3 X^2}$ & $\sqrt{\frac{K^4X^2D^2}{2(K-1)^2T}}$ & $2K X D \sqrt{\frac{T}{2}}$ \\
smooth hinge \eqref{eq:multiclass smooth hinge} & $(1 - \min\{1, \mstart\})^2$ & $\frac{1}{2 K X^2}$ & 0 & $2 K X^2 \|\U\|^2$ \\
bandit smooth hinge \eqref{eq:bandit multiclass smooth hinge} & $(1 - \min\{1, \mstart\})^2$ & $\frac{\gamma}{2 K^2 X^2}$ & $\sqrt{\frac{4K^2X^2D^2}{T}}$ & $2 DKX\sqrt{2T}$ \\
   \hline
\end{tabular}}
\end{table}

\section{Details of Section \ref{sec:gap}}\label{app:gap}

\begin{proof}[Proof of Lemma \ref{lem: surrogate gap}]
As we said before, the updates of $\W_t$ are Online Gradient Descent \citep{zinkevich2003}, which guarantees 
\begin{equation}\label{eq:regret gradient descent}
    \sumT \ell_t(\W_t) - \ell_t(\U) \leq \frac{\|\U\|^2}{2\eta} + \sumT \frac{\eta}{2}\|\g_t\|^2.
\end{equation}
Now, by using \eqref{eq:regret gradient descent} we find
\begin{equation}\label{eq:surrogate gap proof}
\begin{split}
    & \E\Bigg[\sumT  \left(\id[y_t' \not = y_t] - \ell_t(\U)\right)\Bigg] \\ 
    & =  \E\left[\sumT \left(\id[y_t' \not = y_t]] - \ell_t(\W_t)\right) + \sumT \left(\ell_t(\W_t) - \ell_t(\U)\right)\right] \\
    & \leq \frac{\|\U\|^2}{2\eta} + \E\left[\sumT \left( \E_t[\id[y_t' \not = y_t]] - \ell_t(\W_t) + \frac{\eta}{2}\|\g_t\|^2\right)\right] \\
    & =  \frac{\|\U\|^2}{2\eta} + \E\left[\sumT \left((1 - \max\{a_t, \gamma\})\id[y_t^\star \not = y_t] + \max\{a_t, \gamma\}\frac{K-1}{K} - \ell_t(\W_t) + \frac{\eta}{2}\|\g_t\|^2\right)\right] \\
    & \leq \frac{\|\U\|^2}{2\eta} + \gamma \frac{K-1}{K} T + \E\left[\sumT \left((1 - a_t)\id[y_t^\star \not = y_t] + a_t \frac{K-1}{K} - \ell_t(\W_t) + \frac{\eta}{2}\|\g_t\|^2\right)\right],
\end{split}
\end{equation}
where in the last inequality we used $(1 - \max\{a_t, \gamma\}) \leq (1 - a_t)$ and $\max\{a_t, \gamma\} \leq a_t + \gamma$. Adding $\E\left[\sumT \ell_t(\U)]\right]$ to both sides of equation \eqref{eq:surrogate gap proof} completes the proof.
\end{proof}

\section{Details of Full Information Multiclass Classification (Section \ref{sec:full info})}\label{app:full info}

\subsection{Details of Section \ref{sec: full logistic}}\label{app:full logistic}

\begin{proof}[Proof of Theorem \ref{th:full logistic}]
We will prove the Theorem by showing that the surrogate gap is bounded by 0 and then using Lemma \ref{lem: surrogate gap}. The gradient of the logistic loss evaluated at $\W_t$ is given by:
\begin{align*}
    \nabla\ell_t(\W_t) = & \frac{1}{\ln(2)}(\tilde{\p}_t - \e_{y_t})\otimes\x_t, 
\end{align*}
where $\tilde{\p}_t = (\tilde{p}_t(1), \ldots, \tilde{p}_t(k))^\top$ and $\tilde{p}_t(k) = \sigma(\W_t, \x_t, k)$.

 We continue by writing out the surrogate gap:
\begin{equation}\label{eq:cases full log}
    \begin{split}
    & (1 - a_t)\id[\ystart \not = y_t] + a_t \frac{K-1}{K} - \ell_t(\W_t) + \frac{\eta}{2} \|\g_t\|^2 \\
    & \leq  (1 - a_t)\id[\ystart \not = y_t] + a_t \frac{K-1}{K} - \ell_t(\W_t) - \frac{\eta}{\ln(2)} \|\x_t\|^2 \log_2(\tilde{p}_t(y_t))  \\
    & \leq  (1 - a_t)\id[\ystart \not = y_t] + a_t \frac{K-1}{K} - \ell_t(\W_t) - \frac{\eta}{\ln(2)} X^2 \log_2(\tilde{p}_t(y_t))  \\
    & = \begin{cases}
    0 + \frac{K-1}{K} + \log_2(\tilde{p}_t(y_t)) - \frac{\eta}{\ln(2)}X^2 \log_2(\tilde{p}_t(y_t)) & \text{ if } \ptstar < 0.5 \\
    \ptstar + (1 - \ptstar) \frac{K-1}{K} + \log_2(\tilde{p}_t(y_t)) - \frac{\eta}{\ln(2)}X^2 \log_2(\tilde{p}_t(y_t)) & \text{ if } y_t^\star \not = y_t \text{ and } \ptstar \geq 0.5 \\
    (1 - \ptstar) \frac{K-1}{K} + \log_2(\ptstar) - \frac{\eta}{\ln(2)}X^2 \log_2(\ptstar) & \text{ if } y_t^\star = y_t \text{ and } \ptstar \geq 0.5,
    \end{cases}
    \end{split}
\end{equation}
where the first inequality is due to Lemma \ref{lem: pinsker log loss} below.

We now split the analysis into the cases in \eqref{eq:cases full log}. We start with $\ptstar < 0.5$. In this case we use $1 \leq -\log_2(x) $ for $x \in [0, \half]$ and obtain 
\begin{align*}
    & \frac{K-1}{K}  + \log_2(\tilde{p}_t(y_t)) - \frac{\eta}{\ln(2)}X^2 \log_2(\tilde{p}_t(y_t)) \\
    & \leq  -\frac{K-1}{K}\log_2(\tilde{p}_t(y_t)) + \log_2(\tilde{p}_t(y_t)) - \frac{\eta}{\ln(2)} X^2 \log_2(\tilde{p}_t(y_t))\\
    & = \frac{1}{K}\log_2(\tilde{p}_t(y_t)) - \frac{\eta}{\ln(2)} X^2 \log_2(\tilde{p}_t(y_t)),
\end{align*}
which is bounded by 0 since $\eta < \frac{\ln(2)}{K X^2}$.

The second case we consider is when $y_t^\star \not = y_t \text{ and } \ptstar \geq 0.5$. In this case we use $x \leq -\frac{1}{2} \log_2(1 - x)$ for $x \in [0.5, 1]$ and $1-x \leq -\frac{1}{2} \log_2(1 - x)$ for $x \in [0.5, 1]$ and obtain
\begin{align*}
    & \ptstar + (1 - \ptstar) \frac{K-1}{K} + \log_2(\tilde{p}_t(y_t)) - \frac{\eta}{\ln(2)}X^2 \log_2(\tilde{p}_t(y_t))\\
    & \leq -\half \log_2(1 - \ptstar) - \frac{K-1}{K}\half \log_2(1 - \ptstar) + \log_2(\tilde{p}_t(y_t)) - \frac{\eta}{\ln(2)}X^2 \log_2(\tilde{p}_t(y_t))\\
    & = -\half \log_2\left(\sum_{k \not =  y_t}^K \tilde{p}_t(k) \right) - \frac{K-1}{K}\half \log_2\left(\sum_{k \not =  y_t}^K \tilde{p}_t(k)  \right) + \log_2(\tilde{p}_t(y_t)) - \frac{\eta}{\ln(2)}X^2 \log_2(\tilde{p}_t(y_t)) \\
    & \leq -\half \log_2\left(\tilde{p}_t(y_t)\right) - \frac{K-1}{K}\half \log_2\left( \tilde{p}_t(y_t) \right) + \log_2(\tilde{p}_t(y_t)) - \frac{\eta}{\ln(2)}X^2 \log_2(\tilde{p}_t(y_t)) \\
    & =  \frac{1}{2 K} \log_2\left( \tilde{p}_t(y_t) \right) - \frac{\eta}{\ln(2)}X^2 \log_2(\tilde{p}_t(y_t)),
\end{align*}
which is 0 since $\eta = \frac{\ln(2)}{2 K X^2}$.

The last case we need to consider is $y_t^\star = y_t \text{ and } \ptstar \geq 0.5$. In this case we use $1 - x \leq -\log_2(x)$ and obtain 
\begin{align*}
    & (1 - \ptstar) \frac{K-1}{K} + \log_2(\ptstar) - \frac{\eta}{\ln(2)}X^2 \log_2(\ptstar) \\
    & \leq -\frac{K-1}{K}\log_2(\ptstar) + \log_2(\ptstar) - \frac{\eta}{\ln(2)}X^2 \log_2(\ptstar),
\end{align*}
which is bounded by 0 since $\eta = \frac{\ln(2)}{2 K X^2}$. 

We now apply Lemma \ref{lem: surrogate gap}, plug in $\gamma = 0$, and use the above to find:
\begin{align*}
    \E\left[\sumT  \id[y_t' \not = y_t]\right] \leq & \frac{\|\U\|^2}{2\eta} + \sumT \ell_t(\U) + \gamma \frac{K-1}{K} T  \\
     & + \sumT \left((1 - a_t)\id[\ystart \not = y_t] + a_t \frac{K-1}{K} - \ell_t(\W_t) + \frac{\eta}{2}\|\g_t\|^2\right) \\
     \leq & \frac{\|\U\|^2}{2\eta} + \sumT \ell_t(\U).
\end{align*}
Using $\eta = \frac{\ln(2)}{2 K X^2}$ completes the proof.

\end{proof}

\begin{lemma}\label{lem: pinsker log loss}
Let $\ell_t$ be the logistic loss \eqref{eq:full logistic loss}, then 
\begin{align*}
    \|\nabla \ell_t(\W_t)\|^2 \leq & \frac{2}{\ln(2)} \|\x_t\|^2 \ell_t(\W_t).
\end{align*}
\end{lemma}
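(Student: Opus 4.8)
This is a self-bounding (Pinsker-type) inequality for the logistic loss, and the plan is to reduce it to an elementary one-dimensional inequality. First I would record the gradient, which was already computed in the proof of Theorem \ref{th:full logistic}: writing $\tilde{p}_t(k) = \sigma(\W_t,\x_t,k)$ and $\tilde{\p}_t = (\tilde{p}_t(1),\dots,\tilde{p}_t(K))^\top$, we have $\nabla\ell_t(\W_t) = \frac{1}{\ln(2)}(\tilde{\p}_t - \e_{y_t})\otimes\x_t$. Since the Frobenius/Euclidean norm is multiplicative under the Kronecker (outer) product, this gives $\|\nabla\ell_t(\W_t)\|^2 = \frac{1}{\ln(2)^2}\,\|\tilde{\p}_t - \e_{y_t}\|^2\,\|\x_t\|^2$, so it suffices to show $\|\tilde{\p}_t - \e_{y_t}\|^2 \le 2\ln(2)\,\ell_t(\W_t) = -2\ln(\tilde{p}_t(y_t))$.

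Next I would bound the left-hand side. Setting $p = \tilde{p}_t(y_t)$, we have $\|\tilde{\p}_t - \e_{y_t}\|^2 = (1-p)^2 + \sum_{k\neq y_t}\tilde{p}_t(k)^2$. Since the $\tilde{p}_t(k)$ for $k\neq y_t$ are nonnegative and sum to $1-p$, we get $\sum_{k\neq y_t}\tilde{p}_t(k)^2 \le \big(\sum_{k\neq y_t}\tilde{p}_t(k)\big)^2 = (1-p)^2$, hence $\|\tilde{\p}_t - \e_{y_t}\|^2 \le 2(1-p)^2$.

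It then remains to check the scalar inequality $2(1-p)^2 \le -2\ln p$ for $p \in (0,1]$, i.e. $(1-p)^2 \le -\ln p$. This follows from the standard bound $1-p \le -\ln p$ together with $1-p \le 1$, which multiply to give $(1-p)^2 \le (1-p)(-\ln p) \le -\ln p$. (Alternatively one notes $f(p) = -\ln p - (1-p)^2$ satisfies $f(1)=0$ and $f'(p) = -1/p + 2 - 2p < 0$ on $(0,1]$ because $2p^2 - 2p + 1$ has negative discriminant, so $f$ is decreasing and nonnegative.) Chaining the three displays yields $\|\tilde{\p}_t - \e_{y_t}\|^2 \le -2\ln(\tilde{p}_t(y_t)) = 2\ln(2)\,\ell_t(\W_t)$, and substituting back into the norm identity gives $\|\nabla\ell_t(\W_t)\|^2 \le \frac{2}{\ln(2)}\|\x_t\|^2\,\ell_t(\W_t)$, as claimed.

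The only mildly delicate point is the passage from $\sum_{k\neq y_t}\tilde{p}_t(k)^2$ to $(1-p)^2$ — one must use that these are probabilities (nonnegative, summing to $1-p$) rather than a Cauchy–Schwarz bound, which would lose a factor of $K$; everything else is routine. I do not expect any real obstacle beyond being careful with this step and with the base-2 versus natural logarithm bookkeeping.
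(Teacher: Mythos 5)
Your proof is correct. It follows the same overall self-bounding strategy as the paper (reduce $\|\nabla\ell_t(\W_t)\|^2$ to a comparison between the squared distance of $\tilde{\p}_t$ from $\e_{y_t}$ and $-\ln\tilde{p}_t(y_t)$), but the middle of the argument is genuinely different. The paper bounds the squared $\ell_2$ norm by the squared $\ell_1$ norm, $\sum_k(\id[y_t=k]-\tilde{p}_t(k))^2 \le \bigl(\sum_k|\id[y_t=k]-\tilde{p}_t(k)|\bigr)^2$, and then invokes Pinsker's inequality to get $\|\e_{y_t}-\tilde{\p}_t\|_1^2 \le 2\kl(\e_{y_t}\|\tilde{\p}_t) = -2\ln\tilde{p}_t(y_t)$. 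You instead keep the $\ell_2$ norm, use that the off-$y_t$ probabilities are nonnegative and sum to $1-p$ to get $\|\tilde{\p}_t-\e_{y_t}\|^2 \le 2(1-p)^2$, and then prove the scalar inequality $(1-p)^2 \le -\ln p$ from $1-p \le -\ln p$ and $1-p\le 1$. Your route is entirely self-contained (no appeal to Pinsker), and your intermediate bound $2(1-p)^2$ is in fact a factor of $2$ sharper than the paper's $\|\cdot\|_1^2 = 4(1-p)^2$; after the respective final steps both arguments land on the identical bound $-2\ln\tilde{p}_t(y_t)$, so the lemma's constant is unchanged. Your closing caution is also well placed: using Cauchy--Schwarz instead of the probability structure at that step would indeed cost a factor of $K$.
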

\begin{proof}
We have
\begin{equation*}
\begin{split}
    \|\nabla \ell_t(\W_t)\|^2 = & \frac{1}{\ln(2)^2}\|\x_t\|^2\left(\sumk (\id[y_t = k] - \tilde{p}_t(k))^2\right) \\
    \leq & \frac{1}{\ln(2)^2}\|\x_t\|^2\left(\sumk |\id[y_t = k] - \tilde{p}_t(k)|\right)^2 \\
    \leq & - 2\frac{1}{\ln(2)} \|\x_t\|^2 \log_2(\tilde{p}_t(y_t)) \\
    = & 2\frac{1}{\ln(2)} \|\x_t\|^2 \ell_t(\W_t), 
\end{split}
\end{equation*}
where the last inquality follows from Pinsker's inequality \citep[Lemma 12.6.1]{cover1991elements}.
\end{proof}

\subsection{Details of Section \ref{sec: full hinge}}\label{app:full hinge}

\begin{proof}[Proof of Theorem \ref{th: full hinge}]
We will prove the Theorem by showing that the surrogate gap is bounded by 0 and then using Lemma \ref{lem: surrogate gap}. Let $\tilde{k} = \argmax_{k \not = y_t} \inner{\W_t^k}{\x_t}$. The gradient of the smooth multiclass hinge loss is given by 
\begin{align*}
    \nabla \ell_t(\W_t) = 
    \begin{cases}
    (\e_{\tilde{k}} - \e_{y_t}) \otimes \x_t & \text{ if } \ystart \not = y_t \\
    (\e_{\tilde{k}} - \e_{y_t}) \otimes \x_t & \text{ if } \ystart = y_t \text{ and } \mstart \leq \beta\\
    0 & \text{ if } \ystart = y_t \text{ and } \mstart > \beta.
    \end{cases}
\end{align*}

We continue by writing out the surrogate gap:
\begin{equation}\label{eq:cases full hinge}
    \begin{split}
    & (1 - a_t)\id[\ystart \not = y_t] + a_t \frac{K-1}{K} - \ell_t(\W_t) + \frac{\eta}{2} \|\g_t\|^2 \\
    & = \begin{cases}
    \mstart + (1 - \mstart)\frac{K-1}{K} - (1-m_t(\W_t, y_t)) + \eta \|\x_t\|^2 & \text{ if } \ystart \not = y_t \text{ and } \mstart \leq \beta \\
    (1 - \mstart)\frac{K-1}{K} - (1-\mstart) + \eta \|\x_t\|^2 & \text{ if } \ystart = y_t \text{ and } \mstart \leq \beta \\
    1 - (1-m_t(\W_t, y_t)) + \eta \|\x_t\|^2 & \text{ if } \ystart \not = y_t \text{ and } \mstart > \beta \\
    0 & \text{ if } \ystart = y_t \text{ and } \mstart > \beta.
    \end{cases}
    \end{split}
\end{equation}
In the remainder of the proof we will repeatedly use the following useful inequality for whenever $y_t \not= \ystart$:
\begin{equation}\label{eq:mtstar + mt}
\begin{split}
    \mstart + m_t(\W_t, y_t) = & \inner{\W_t^{\ystart}}{\x_t} - \max_{k \not= \ystart}\inner{\W_t^k}{\x_t} +  \inner{\W_t^{y_t}}{\x_t} - \max_{k \not= y_t} \inner{\W_t^k}{\x_t} \\
    = & \inner{\W_t^{y_t}}{\x_t} - \max_{k \not= \ystart}\inner{\W_t^k}{\x_t}\\
    \leq & \inner{\W_t^{y_t}}{\x_t} - \inner{\W_t^{y_t}}{\x_t} = 0.
\end{split}
\end{equation}
We now split the analysis into the cases in \eqref{eq:cases full hinge}. We start with $\ystart \not = y_t \text{ and } \mstart \leq \beta$, in which case the surrogate gap can be bounded by 0 when $\eta \leq \frac{1}{K X^2}$:
\begin{align*}
    &\mstart + (1 - \mstart)\frac{K-1}{K} - (1-m_t(\W_t, y_t)) + \eta \|\x_t\|^2 \\
    & =  \mstart + m_t(\W_t, y_t) + (1 - \mstart)\frac{K-1}{K} - 1+ \eta \|\x_t\|^2 \\
    & \leq -\frac{1}{K} + \eta X^2 \tag{by equation \eqref{eq:mtstar + mt}}\\
    & \leq 0.
\end{align*}

We continue with the case where $\ystart = y_t \text{ and } \mstart \leq \beta$. In this case we have:
\begin{align*}
    (1 - \mstart)\frac{K-1}{K} - (1-\mstart) + \eta \|\x_t\|^2 & = -(1 - \mstart)\frac{1}{K}+ \eta \|\x_t\|^2  \leq -\frac{1-\beta}{K} + \eta X^2,
\end{align*}
which is zero since $\eta = \frac{1-\beta}{K X^2}$.

Finally, in the case where $\ystart \not = y_t \text{ and } \mstart > \beta$ we have:
\begin{align*}
    1 - (1-m_t(\W_t, y_t)) + \eta \|\x_t\|^2 = & m_t(\W_t, y_t) + \eta \|\x_t\|^2 \\
    \leq & -\mstart + \eta \|\x_t\|^2 \tag{by equation \eqref{eq:mtstar + mt}}\\
    \leq & -\beta + \eta X^2,
\end{align*}
which is bounded by zero since $\beta =\frac{1}{K}$ and $\eta \leq \frac{1}{K X^2}$.

We now apply Lemma \ref{lem: surrogate gap}, plug in $\gamma = 0$, and use the above to find:
\begin{align*}
    \E\left[\sumT  \id[y_t' \not = y_t]\right] \leq & \frac{\|\U\|^2}{2\eta} + \sumT \ell_t(\U) + \gamma T  \\
     & + \sumT \left((1 - a_t)\id[\ystart \not = y_t] + a_t \frac{K-1}{K} - \ell_t(\W_t) + \frac{\eta}{2}\|\g_t\|^2\right) \\
     \leq & \frac{\|\U\|^2}{2\eta} + \sumT \ell_t(\U).
\end{align*}
Using $\eta = \frac{1-\beta}{K X^2} = \frac{K-1}{K^2X^2}$ completes the proof.
\end{proof}

\subsection{Details of Section \ref{sec: full smooth hinge}}\label{app:full smooth hinge}

\begin{proof}[Proof of Theorem \ref{th: full smooth hinge}]
We will prove the Theorem by showing that the surrogate gap is bounded by 0 and then using Lemma \ref{lem: surrogate gap}. Let $\tilde{k} = \argmax_{k \not= y_t} \inner{\W_t^k}{\x_t}$. The gradient of the smooth multiclass hinge loss is given by 
\begin{align*}
    \nabla \ell_t(\W_t) = 
    \begin{cases}
    2(\e_{\tilde{k}} - \e_{y_t}) \otimes \x_t & \text{ if } \ystart \not = y_t \\
    2(\e_{\tilde{k}} - \e_{y_t})(1 - \mstart) \otimes \x_t & \text{ if } \ystart = y_t \text{ and } \mstart < 1\\
    0 & \text{ if } \ystart = y_t \text{ and } \mstart \geq 1.
    \end{cases}
\end{align*}

We continue by writing out the surrogate gap:
\begin{equation}\label{eq:cases full smooth hinge}
    \begin{split}
    & (1 - a_t)\id[\ystart \not = y_t] + a_t \frac{K-1}{K} - \ell_t(\W_t) + \frac{\eta}{2} \|\g_t\|^2 \\
    & = \begin{cases}
    2\mstart - {\mstart}^2 + (1 -  \mstart)^2\frac{K-1}{K} - (1 - 2m_t(\W_t, y_t)) + \eta 4 \|\x_t\|^2 & \text{ if } \ystart \not = y_t \text{ and } \mstart < 1\\
    (1 -  \mstart)^2\frac{K-1}{K} - (1 -  \mstart)^2 + \eta 4 \|\x_t\|^2(1 -  \mstart)^2 & \text{ if } \ystart = y_t \text{ and } \mstart < 1\\
    1 - (1 - 2 m_t(\W_t, y_t)) + \eta 4 \|\x_t\|^2 & \text{ if } \ystart \not = y_t \text{ and } \mstart \geq 1 \\
    0 & \text{ if } \ystart = y_t \text{ and } \mstart \geq 1.
    \end{cases}
    \end{split}
\end{equation}
We now split the analysis into the cases in \eqref{eq:cases full smooth hinge}. We start with the case where $\ystart \not = y_t \text{ and } \mstart < 1$. By using \eqref{eq:mtstar + mt} we can see that with $\eta = \frac{1}{4 K X^2}$ the surrogate gap is bounded by 0:
\begin{align*}
    & 2\mstart - {\mstart}^2 + (1 -  \mstart)^2\frac{K-1}{K} - (1 - 2m_t(\W_t, y_t)) + \eta 4 \|\x_t\|^2 \\
    & = 2(\mstart + m_t(\W_t, y_t)) - {\mstart}^2 + (1 -  \mstart)^2\frac{K-1}{K} - 1 + \eta 4 \|\x_t\|^2 \\
    & \leq - {\mstart}^2 + (1 -  \mstart)^2\frac{K-1}{K} - 1 + \eta 4 X^2 \tag{by equation \eqref{eq:mtstar + mt}}\\
    & \leq -\frac{1}{K} + \eta 4 X^2 \leq 0.
\end{align*}

The next case we consider is when $\ystart = y_t \text{ and } \mstart < 1$. In this case we have 
\begin{align*}
     (1 -  \mstart)^2\frac{K-1}{K} - (1 -  \mstart)^2 + \eta 4 \|\x_t\|^2(1 -  \mstart)^2 
    & = -(1 -  \mstart)^2\frac{1}{K} + \eta 4 \|\x_t\|^2(1 -  \mstart)^2,
\end{align*}
which is bounded by 0 since $\eta = \frac{1}{4 K X^2}$.

Finally, if $\ystart \not = y_t \text{ and } \mstart \geq 1$ then 
\begin{align*}
    1 - (1 - 2 m_t(\W_t, y_t)) + \eta 4 \|\x_t\|^2 = & 2 m_t(\W_t, y_t) + \eta 4 \|\x_t\|^2 \\
    \leq & -2\mstart + \eta 4\|\x_t\|^2 \tag{by equation \eqref{eq:mtstar + mt}}\\
    \leq & -2 + \eta 4X^2,
\end{align*}
which is bounded by 0 since $\eta < \frac{1}{2X^2}$. We apply Lemma \ref{lem: surrogate gap} with $\gamma = 0$ and use the above to find:
\begin{align*}
    \E\left[\sumT  \id[y_t' \not = y_t]\right] \leq & \frac{\|\U\|^2}{2\eta} + \sumT \ell_t(\U) + \gamma \frac{K-1}{K} T  \\
     & + \sumT \left((1 - a_t)\id[\ystart \not = y_t] + a_t \frac{K-1}{K} - \ell_t(\W_t) + \frac{\eta}{2}\|\g_t\|^2\right) \\
     \leq & \frac{\|\U\|^2}{2\eta} + \sumT \ell_t(\U).
\end{align*}
Using $\eta = \frac{1}{4 K X^2}$ completes the proof.

\end{proof}

\section{Details of Bandit Multiclass Classification (Section \ref{sec: bandit})}\label{app:bandit}

\subsection{Details of Section \ref{sec: bandit logistic}}\label{app:bandit logistic}

\begin{proof}[Proof of Theorem \ref{th:bandit logistic}]
First, by straightforward calculations we can see that $p_t'(y_t) \geq \frac{(1-\gamma)\exp(- 2 D X) + \gamma}{K} = \delta$. As in the full information case we will prove the Theorem by showing that the surrogate gap is bounded by 0 and then using Lemma \ref{lem: surrogate gap}. We start by writing out the surrogate gap:
\begin{equation}\label{eq:cases bandit log}
    \begin{split}
    &  \E\left[(1 - a_t)\id[\ystart \not = y_t] + a_t \frac{K-1}{K} -\E_t[\ell_t(\W_t)] + \frac{\eta}{2} \E_t\left[\|\g_t\|^2\right]\right] \\
    & = \E\left[(1 - a_t)\id[\ystart \not = y_t] + a_t \frac{K-1}{K} + \log_2(\tilde{p}_t(y_t)) + \frac{\eta}{2\ln(2)^2p_t'(y_t)} \|(\tilde{\p}_t - \e_{y_t})\otimes\x_t\|^2\right] \\
    & \leq  \E\left[(1 - a_t)\id[\ystart \not = y_t] + a_t \frac{K-1}{K} + \log_2(\tilde{p}_t(y_t)) - \frac{\eta}{\ln(2)p_t'(y_t)} X^2 \log_2(\tilde{p}_t(y_t))\right]  \\
    & = \begin{cases}
    \frac{K-1}{K} + \E\left[\log_2(\tilde{p}_t(y_t)) - \frac{\eta}{\ln(2)p_t'(y_t)}X^2 \log_2(\tilde{p}_t(y_t))\right] & \text{ if } \ptstar < 0.5 \\
    \E\left[\ptstar + (1 - \ptstar) \frac{K-1}{K} + \log_2(\tilde{p}_t(y_t)) - \frac{\eta}{\ln(2)p_t'(y_t)}X^2 \log_2(\tilde{p}_t(y_t))\right] & \text{ if } y_t^\star \not = y_t \text{ and } \ptstar \geq 0.5 \\
    \E\left[(1 - \ptstar) \frac{K-1}{K} + \log_2(\ptstar) - \frac{\eta}{\ln(2)p_t'(\ystart)}X^2 \log_2(\ptstar)\right] & \text{ if } y_t^\star = y_t \text{ and } \ptstar \geq 0.5,
    \end{cases}
    \end{split}
\end{equation}
where the first inequality is due to Lemma \ref{lem: pinsker log loss}.

We now split the analysis into the cases in \eqref{eq:cases bandit log}. We start with $\ptstar < 0.5$. In this case we use $1 \leq -\log_2(x) $ for $x \in [0, \half]$ and obtain 
\begin{align*}
    & \frac{K-1}{K}  + \E[\log_2(\tilde{p}_t(y_t)) - \frac{\eta}{\ln(2)p_t'(y_t)}X^2 \log_2(\tilde{p}_t(y_t))] \\
    & \leq  \E\left[-\frac{K-1}{K}\log_2(\tilde{p}_t(y_t)) + \log_2(\tilde{p}_t(y_t)) - \frac{\eta}{\ln(2)p_t'(y_t)} X^2 \log_2(\tilde{p}_t(y_t))\right]\\
    & \leq \E\left[-\frac{K-1}{K}\log_2(\tilde{p}_t(y_t)) + \log_2(\tilde{p}_t(y_t)) - \frac{\eta}{\ln(2)\delta} X^2 \log_2(\tilde{p}_t(y_t))\right]\\
\end{align*}
which is bounded by 0 when $\eta \leq \frac{\ln(2)\delta}{K X^2}$.

The second case we consider is when $y_t^\star \not = y_t \text{ and } \ptstar \geq 0.5$. In this case we use $x \leq -\frac{1}{2} \log_2(1 - x)$ for $x \in [0.5, 1]$ and $1-x \leq -\frac{1}{2} \log_2(1 - x)$ for $x \in [0.5, 1]$ and obtain
\begin{align*}
    & \E\left[\ptstar + (1 - \ptstar) \frac{K-1}{K} + \log_2(\tilde{p}_t(y_t)) - \frac{\eta}{\ln(2)p_t'(y_t)}X^2 \log_2(\tilde{p}_t(y_t))\right]\\
    & \leq \E\left[-\half \log_2(1 - \ptstar) - \frac{K-1}{K}\half \log_2(1 - \ptstar) + \log_2(\tilde{p}_t(y_t)) - \frac{\eta}{\ln(2)\delta}X^2 \log_2(\tilde{p}_t(y_t))\right]\\
    & =\E\left[-\half \log_2\left(\sum_{k \not =  y_t}^K \tilde{p}_t(k) \right) - \frac{K-1}{K}\half \log_2\left(\sum_{k \not =  y_t}^K \tilde{p}_t(k)  \right) + \log_2(\tilde{p}_t(y_t)) - \frac{\eta}{\ln(2)\delta}X^2 \log_2(\tilde{p}_t(y_t))\right]\\
    & \leq \E\left[-\half \log_2\left(\tilde{p}_t(y_t)\right) - \frac{K-1}{K}\half \log_2\left( \tilde{p}_t(y_t) \right) + \log_2(\tilde{p}_t(y_t)) - \frac{\eta}{\ln(2)\delta}X^2 \log_2(\tilde{p}_t(y_t))\right] \\
    & = \E\left[\frac{1}{2K} \log_2(\tilde{p}_t(y_t)) - \frac{\eta}{\ln(2)\delta}X^2 \log_2(\tilde{p}_t(y_t))\right],
\end{align*}
which is bounded by 0 since $\eta = \frac{\ln(2)\delta}{2 K X^2}$.

The last case we need to consider is when $ y_t^\star = y_t \text{ and } \ptstar \geq 0.5$. In this case we use $1 - x \leq -\log_2(x)$ and obtain 
\begin{align*}
    & \E\left[(1 - \ptstar) \frac{K-1}{K} + \log_2(\ptstar) - \frac{\eta}{\ln(2)p_t'(\ystart)}X^2 \log_2(\ptstar)\right] \\
    & \leq \E\left[-\frac{K-1}{K}\log_2(\ptstar) + \log_2(\ptstar) - \frac{\eta}{\ln(2)\delta}X^2 \log_2(\ptstar)\right],
\end{align*}
which is bounded by 0 when $\eta \leq \frac{\ln(2)\delta}{K X^2}$. 

We now apply Lemma \ref{lem: surrogate gap} and use the above to find:
\begin{align*}
    \E\left[\sumT  \id[y_t' \not = y_t]\right] \leq & \frac{\|\U\|^2}{2\eta} + \E\left[\sumT \ell_t(\U)\right] + \gamma \frac{K-1}{K}T  \\
     & + \E\left[\sumT \left((1 - a_t)\id[\ystart \not = y_t] + a_t \frac{K-1}{K} - \ell_t(\W_t) + \frac{\eta}{2}\|\g_t\|^2\right)\right] \\
     \leq & \frac{\|\U\|^2}{2\eta} + \gamma T + \E\left[\sumT \ell_t(\U)\right].
\end{align*}
Using $\eta = \frac{\ln(2)\delta}{2 K X^2}$ gives us:
\begin{align*}
    \E\left[\sumT  \id[y_t' \not = y_t]\right] \leq & \frac{K^2 X^2 \|\U\|^2}{\ln(2)((1-\gamma)\exp(- 2 D X) + \gamma)} + \gamma T + \E\left[\sumT \ell_t(\U)\right],
\end{align*}
Setting $\gamma = 0$ gives us
\begin{align*}
    \E\left[\sumT  \id[y_t' \not = y_t]\right] \leq & \frac{K^2 X^2 D^2}{\ln(2)\exp(- 2 D X)} + \E\left[\sumT \ell_t(\U)\right].
\end{align*}
If instead we set $\gamma = \min\left\{1, \sqrt{\frac{K^2X^2D^2}{\ln(2)T}}\right\}$ we consider two cases. In the case where $1 \leq \sqrt{\frac{K^2X^2D^2}{T}}$ we have that $T \leq K^2X^2D^2$ and therefore 
\begin{align*}
    \E\left[\sumT  \id[y_t' \not = y_t]\right] \leq & 2 \frac{K^2 X^2 D^2}{\ln(2)} + \E\left[\sumT \ell_t(\U)\right].
\end{align*}
In the case where $1 > \sqrt{\frac{K^2X^2D^2}{T}}$ we have that 
\begin{align*}
    \E\left[\sumT  \id[y_t' \not = y_t]\right] \leq & 2 K X D\sqrt{\frac{T}{\ln(2)}} + \E\left[\sumT \ell_t(\U)\right],
\end{align*}
which after combining the above completes the proof.
\end{proof}

\subsection{Details of Section \ref{sec: bandit hinge}}\label{app:bandit hinge}

\begin{proof}[Proof of Theorem \ref{th: bandit hinge}]
First, note that $p_t'(y_t) \geq \frac{\gamma}{K}$. The proof proceeds in a similar way as in the full information setting (Theorem \ref{th: full hinge}), except now we use that $p_t'(y_t) \geq \frac{\gamma}{K}$ to bound $\E_t[\|\g_t\|^2]$. We will prove the Theorem by showing that the surrogate gap is bounded by 0 and then using Lemma \ref{lem: surrogate gap}. We start by splitting the surrogate gap in cases:
\begin{equation}\label{eq:cases bandit hinge}
    \begin{split}
    & \Eb{(1 - a_t)\id[\ystart \not = y_t] + a_t \frac{K-1}{K} - \E_t[\ell_t(\W_t)] + \frac{\eta}{2} \E_t[\|\g_t\|^2]} \\
    & = \begin{cases}
    \Eb{\mstart + (1 - \mstart)\frac{K-1}{K} - (1-m_t(\W_t, y_t)) + \frac{\eta}{p_t'(y_t)} \|\x_t\|^2} & \text{ if } \ystart \not = y_t \text{ and } \mstart \leq \beta \\
    \Eb{(1 - \mstart)\frac{K-1}{K} - (1-\mstart) + \frac{\eta}{p_t'(y_t)} \|\x_t\|^2} & \text{ if } \ystart = y_t \text{ and } \mstart \leq \beta \\
    \Eb{1 - (1-m_t(\W_t, y_t)) + \frac{\eta}{p_t'(y_t)} \|\x_t\|^2} & \text{ if } \ystart \not = y_t \text{ and } \mstart > \beta \\
    0 & \text{ if } \ystart = y_t \text{ and } \mstart > \beta.
    \end{cases}
    \end{split}
\end{equation}
We now split the analysis into the cases in \eqref{eq:cases bandit hinge}. We start with $\ystart \not = y_t \text{ and } \mstart \leq \beta$. 
The surrogate gap can now be bounded by 0 when $\eta \leq \frac{\gamma}{K^2 X^2}$:
\begin{align*}
    &\Eb{\mstart + (1 - \mstart)\frac{K-1}{K} - (1-m_t(\W_t, y_t)) + \frac{\eta}{p_t'(y_t)} \|\x_t\|^2} \\
    & =  \Eb{\mstart + m_t(\W_t, y_t) + (1 - \mstart)\frac{K-1}{K} - 1+ \frac{\eta}{p_t'(y_t)} \|\x_t\|^2} \\
    & \leq -\frac{1}{K} + \frac{K\eta}{\gamma} X^2 \tag{equation \eqref{eq:mtstar + mt}}\\
    & \leq 0.
\end{align*}

We continue with the case where $\ystart = y_t \text{ and } \mstart \leq \beta$. In this case we have:
\begin{align*}
     \Eb{(1 - \mstart)\frac{K-1}{K} - (1-\mstart) + \eta \|\x_t\|^2}
    & = \Eb{-(1 - \mstart)\frac{1}{K}+ \frac{\eta}{p_t'(y_t)} \|\x_t\|^2} \\
    & \leq -\frac{1-\beta}{K} + \frac{K\eta}{\gamma} X^2,
\end{align*}
which is bounded by zero since $\eta = \frac{\gamma(1-\beta)}{K^2 X^2}$.

Finally, in the case where $\ystart \not = y_t \text{ and } \mstart > \beta$ we have:
\begin{align*}
    \Eb{1 - (1-m_t(\W_t, y_t)) + \frac{\eta}{p_t'(y_t)} \|\x_t\|^2} = & \Eb{m_t(\W_t, y_t) + \frac{\eta}{p_t'(y_t)} \|\x_t\|^2} \\
    \leq & \Eb{-\mstart + \frac{\eta}{p_t'(y_t)} \|\x_t\|^2} \tag{by equation \eqref{eq:mtstar + mt}}\\
    \leq & -\beta + \frac{K\eta}{\gamma} X^2,
\end{align*}
which is bounded by zero since $\eta = \frac{\gamma(1-\beta)}{K^2 X^2}$ and $\beta \leq 0.5$.

We now apply Lemma \ref{lem: surrogate gap} and use the above to find:
\begin{align*}
    \Eb{\sumT \id[y_t' \not = y_t]} \leq & \frac{\|\U\|^2}{2\eta} + \Eb{\sumT \ell_t(\U)} + \gamma \frac{K-1}{K} T  \\
     & + \Eb{\sumT \left((1 - a_t)\id[\ystart \not = y_t] + a_t \frac{K-1}{K} - \ell_t(\W_t) + \frac{\eta}{2}\|\g_t\|^2\right)} \\
     \leq & \frac{D^2}{2\eta} + \gamma \frac{K-1}{K} T + \Eb{\sumT \ell_t(\U)}.
\end{align*}
Plugging in  $\eta = \frac{\gamma(1-\beta)}{K^2 X^2}$ and $\beta = \frac{1}{K}$ gives us:
\begin{align*}
    \E\left[\sumT  \id[y_t' \not = y_t]\right] \leq & \frac{K^3X^2D^2}{2 \gamma(K-1)} + \gamma \frac{K-1}{K} T+ \Eb{\sumT \ell_t(\U)}.
\end{align*}
We now set $\gamma = \min\left\{1, \sqrt{\frac{K^3X^2D^2}{2(1-\beta)(K-1)T}}\right\}$. In the case where $1 \leq \sqrt{\frac{K^3X^2D^2}{2(1-\beta)(K-1)T}}$ we have
\begin{align*}
    \E\left[\sumT  \id[y_t' \not = y_t]\right] \leq & \frac{K^3X^2D^2}{K-1} + \Eb{\sumT \ell_t(\U)}.
\end{align*}
In the case where $1 > \sqrt{\frac{K^3X^2D^2}{2(1-\beta)(K-1)T}}$ we have
\begin{align*}
    \E\left[\sumT  \id[y_t' \not = y_t]\right] \leq & 2 K X D \sqrt{\frac{T}{2}} + \Eb{\sumT \ell_t(\U)},
\end{align*}
which completes the proof.
\end{proof}

\subsection{Details of Section \ref{sec: bandit smooth hinge}}\label{app:bandit smooth hinge}

\begin{proof}[Proof of Theorem \ref{th: bandit smooth hinge}]
First, note that $p_t'(y_t) \geq \frac{\gamma}{K}$. The proof proceeds in a similar way as in the full information case. We will prove the Theorem by showing that the surrogate gap is bounded by 0 and then using Lemma \ref{lem: surrogate gap}. We start by writing out the surrogate gap:
\begin{equation}\label{eq:cases bandit smooth hinge}
    \begin{split}
    & \Eb{(1 - a_t)\id[\ystart \not = y_t] + a_t \frac{K-1}{K} - \E_t[\ell_t(\W_t)] + \frac{\eta}{2} \E_t[\|\g_t\|^2]} \\
    & = \begin{cases}
    \Eb{2\mstart - {\mstart}^2 + (1 -  \mstart)^2\frac{K-1}{K} - (1 - 2m_t(\W_t, y_t)) + \frac{\eta}{p_t'(y_t)} 4 \|\x_t\|^2} & \text{ if } \ystart \not = y_t \text{ and } \mstart < 1\\
    \Eb{(1 -  \mstart)^2\frac{K-1}{K} - (1 -  \mstart)^2 + \frac{\eta}{p_t'(y_t)} 4 \|\x_t\|^2(1 -  \mstart)^2} & \text{ if } \ystart = y_t \text{ and } \mstart < 1\\
    \Eb{1 - (1 - 2 m_t(\W_t, y_t)) + \frac{\eta}{p_t'(y_t)} 4 \|\x_t\|^2} & \text{ if } \ystart \not = y_t \text{ and } \mstart \geq 1 \\
    0 & \text{ if } \ystart = y_t \text{ and } \mstart \geq 1.
    \end{cases}
    \end{split}
\end{equation}
We now split the analysis into the cases in \eqref{eq:cases bandit smooth hinge}. We start with the case where $\ystart \not = y_t \text{ and } \mstart < 1$. By using \eqref{eq:mtstar + mt} we can see that for $\eta = \frac{\gamma}{4 K^2 X^2}$
\begin{align*}
    & \Eb{2\mstart - {\mstart}^2 + (1 -  \mstart)^2\frac{K-1}{K} - (1 - 2m_t(\W_t, y_t)) + \frac{\eta}{p_t'(y_t)} 4 \|\x_t\|^2} \\
    & = \Eb{2(\mstart + m_t(\W_t, y_t)) - {\mstart}^2 + (1 -  \mstart)^2\frac{K-1}{K} - 1 + \frac{\eta}{p_t'(y_t)} 4 \|\x_t\|^2} \\
    & \leq \Eb{- {\mstart}^2 + (1 -  \mstart)^2\frac{K-1}{K} - 1 + \frac{\eta}{p_t'(y_t)} 4 X^2} \tag{by equation \eqref{eq:mtstar + mt}}\\
    & \leq -\frac{1}{K} + \frac{K \eta}{\gamma} 4 X^2 \leq 0.
\end{align*}

The next case we consider is when $\ystart = y_t \text{ and } \mstart < 1$. In this case we have 
\begin{align*}
    & \Eb{(1 -  \mstart)^2\frac{K-1}{K} - (1 -  \mstart)^2 + \frac{\eta}{p_t'(y_t)} 4 \|\x_t\|^2(1 -  \mstart)^2} \\
    & = \Eb{-(1 -  \mstart)^2\frac{1}{K} + \frac{\eta}{p_t'(y_t)} 4 \|\x_t\|^2(1 -  \mstart)^2}\\
    & = \Eb{-(1 -  \mstart)^2\frac{1}{K} + \frac{K \eta}{\gamma} 4 X^2(1 -  \mstart)^2},
\end{align*}
which is bounded by 0 since $\eta = \frac{\gamma}{4 K^2 X^2}$.

Finally, if $\ystart \not = y_t \text{ and } \mstart \geq 1$ then 
\begin{align*}
    \Eb{1 - (1 - 2 m_t(\W_t, y_t)) + \frac{\eta}{p_t'(y_t)} 4 \|\x_t\|^2} = & \Eb{2 m_t(\W_t, y_t) + \frac{\eta}{p_t'(y_t)} 4 \|\x_t\|^2} \\
    \leq & \Eb{-2\mstart + \frac{\eta}{p_t'(y_t)} 4\|\x_t\|^2} \tag{by equation \eqref{eq:mtstar + mt}}\\
    \leq & -2 + \frac{K \eta}{\gamma} 4X^2,
\end{align*}
which is bounded by 0 since $\eta < \frac{\gamma}{2K^2X^2}$. We apply Lemma \ref{lem: surrogate gap} and use the above to find:
\begin{align*}
    \E\left[\sumT  \id[y_t' \not = y_t]\right] \leq & \frac{\|\U\|^2}{2\eta} + \Eb{\sumT\ell_t(\U)} + \gamma T  \\
     & + \Eb{\sumT \left((1 - a_t)\id[\ystart \not = y_t] + a_t \frac{K-1}{K} - \ell_t(\W_t) + \frac{\eta}{2} \|\g_t\|^2\right)} \\
     \leq & \frac{D^2}{2\eta} + \gamma T + \Eb{\sumT \ell_t(\U)}.
\end{align*}
Plugging in $\eta = \frac{\gamma}{4 K^2 X^2}$ gives us:
\begin{align*}
    \E\left[\sumT  \id[y_t' \not = y_t]\right] \leq & \frac{2K^2X^2D^2}{\gamma}  + \gamma T + \Eb{\sumT \ell_t(\U)}.
\end{align*}
Now we set $\gamma = \min\left\{1, \sqrt{\frac{2K^2X^2D^2}{T}}\right\}$. In the case where $1 \leq \sqrt{\frac{2K^2X^2D^2}{T}}$ we have 
\begin{align*}
    \E\left[\sumT  \id[y_t' \not = y_t]\right] \leq & 4K^2X^2D^2 + \Eb{\sumT \ell_t(\U)}.
\end{align*}
In the case where $1 > \sqrt{\frac{2K^2X^2D^2}{T}}$ we have 
\begin{align*}
    \E\left[\sumT  \id[y_t' \not = y_t]\right] \leq & 2 DKX\sqrt{2T} + \Eb{\sumT \ell_t(\U)},
\end{align*}
which completes the proof.

\end{proof}

\section{Online Classification with Abstention}\label{app:adahedge with abstentions}

\begin{algorithm}[t]
\caption{\textproc{AdaHedge} with abstention}\label{alg:AdaAb}
\begin{algorithmic}[1]
\Require \textproc{AdaHedge} 
\For{$t = 1 \ldots T$}
\State Obtain expert predictions $\y_t = (y_t^1, \ldots, y_t^d)^\top \in [-1, 1]^d$ 
\State Obtain expert distribution $\hat{\p}_t$ from \textproc{AdaHedge}
\State Set $\yhatt = \inner{\hat{\p}_t}{\y_t}$
\State Let $\ystart = \sign(\yhatt)$
\State Set $b_t = 1 - |\yhatt|$
\State Predict $y_t' = \ystart$ with probability $1 - b_t$ and predict $y_t'  = * $ with probability $b_t$
\State Obtain $\ell_t$ and send $\ell_t$ to \textproc{AdaHedge}
\EndFor
\end{algorithmic}
\end{algorithm}

The online classification with abstention setting was introduced by \citet{neu2020fast} and is a special case of the prediction with expert advice setting \citet{vovk1990aggregating, LittlestoneWarmuth1994}. For brevity we only consider the case where there are only 2 labels, -1 and 1. The online classification with abstention setting is different from the standard classification setting in that the learner has access to a third option, abstaining. \citet{neu2020fast} show that when the cost for abstaining is smaller than $\half$ in all rounds it is possible to tune Exponential Weights such that it suffers constant regret with respect to the best expert in hindsight. \citet{neu2020fast} only consider the zero-one loss, but we show that a similar bound also holds for the hinge loss (and also for the zero-one loss as a special case of the hinge loss). We use a different proof technique from \citet{neu2020fast}, which was the inspiration for the proofs of the mistake bounds of \textproc{Gaptron}. Instead of vanilla Exponential Weights we use a slight adaptation of \textproc{AdaHedge} \citep{deRooij2014} to prove constant regret bounds when all abstention costs $c_t$ are smaller than $\half$. In online classification with abstention, in each round $t$
\begin{itemize}
    \item[1] the learner observes the predictions $y_t^i \in [-1, 1]$ of experts $i = 1, \ldots, d$
    \item[2] based on the experts' predictions the learner predicts $y_t'\in [-1, 1] \cup *$, where $*$ stands for abstaining
    \item[3] the environment reveals $y_t \in \{-1, 1\}$
    \item[4] the learner suffers loss $\ell_t(y_t') = \half (1 - y_t y_t')$ if $y_t' \in [-1, 1]$ and $c_t$ otherwise. 
\end{itemize}

The algorithm we use can be found in Algorithm \ref{alg:AdaAb}. A parallel result to Lemma \ref{lem: surrogate gap} can be found in Lemma \ref{lem:abstention gap}, which we will use to derive the regret of Algorithm \ref{alg:AdaAb}.
\begin{lemma}\label{lem:abstention gap}
For any expert $i$, the expected loss of Algorithm \ref{alg:AdaAb} satisfies:
\begin{align*}
    \sumT \left((1-b_t)\ell_t(\ystart) + b_t c_t\right) \leq & \sumT \ell_t(y_t^i) + \inf_{\eta > 0}\left\{\frac{\ln(d)}{\eta} + \sumT \underbrace{\left((1 - b_t)\ell_t(\ystart) + c_t b_t + \eta v_t - \ell_t(\yhatt)\right)}_{\textnormal{Abstention gap}}\right\} \\
    & + \frac{4}{3} \ln(d) + 2,
\end{align*}
where $v_t = \E_{i\sim \hat{\p}_t}[(\ell_t(\yhatt) - \ell_t(y_t^i))^2]$.
\end{lemma}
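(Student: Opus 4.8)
The plan is to mimic the proof of Lemma \ref{lem: surrogate gap}: start from a standard regret bound for the base algorithm (here \textproc{AdaHedge} rather than OGD), then add and subtract the base algorithm's loss $\ell_t(\yhatt)$ so as to isolate the ``abstention gap'' terms. First I would invoke the known regret guarantee for \textproc{AdaHedge} \citep{deRooij2014}, which against any expert $i$ and for \emph{any} fixed $\eta>0$ yields a bound of the form $\sumT \ell_t(\yhatt) - \sumT \ell_t(y_t^i) \le \frac{\ln d}{\eta} + \eta\sumT v_t + (\text{lower-order terms})$ where $v_t = \E_{i\sim\hat{\p}_t}[(\ell_t(\yhatt)-\ell_t(y_t^i))^2]$ is the mixability/variance term. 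The slightly nonstandard point is that \textproc{AdaHedge} is parameter-free, so its actual bound is an $\inf_{\eta}$ of such expressions plus the additive $\frac43\ln d + 2$ slack; I would cite the precise \textproc{AdaHedge} analysis to extract exactly that $\inf_{\eta>0}\{\ln(d)/\eta + \eta\sumT v_t\} + \frac43\ln d + 2$ form. (Here one also uses that losses are bounded in $[0,1]$, since $\ell_t(y_t')=\half(1-y_ty_t')\in[0,1]$ and $c_t\in[0,\half]$.)

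Next I would compute the left-hand side, the expected loss of Algorithm \ref{alg:AdaAb}. Since $y_t' = \ystart$ with probability $1-b_t$ and $y_t' = *$ with probability $b_t$, the per-round expected loss is exactly $(1-b_t)\ell_t(\ystart) + b_t c_t$, which is where the LHS comes from. Summing over $t$ gives $\sumT\big((1-b_t)\ell_t(\ystart)+b_tc_t\big)$.

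Then comes the bookkeeping step. Write
\begin{align*}
\sumT\big((1-b_t)\ell_t(\ystart)+b_tc_t\big) - \sumT\ell_t(y_t^i)
= \sumT\big((1-b_t)\ell_t(\ystart)+b_tc_t - \ell_t(\yhatt)\big) + \sumT\big(\ell_t(\yhatt) - \ell_t(y_t^i)\big),
\end{align*}
bound the second sum using the \textproc{AdaHedge} guarantee above, and then, for each $t$, insert $+\eta v_t - \eta v_t$ so that $\big((1-b_t)\ell_t(\ystart)+b_tc_t-\ell_t(\yhatt)\big) + \eta v_t$ becomes precisely the abstention gap, while the leftover $-\eta\sumT v_t$ is absorbed into the $\inf_\eta$ term that already contained $+\eta\sumT v_t$. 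Rearranging and adding $\sumT\ell_t(y_t^i)$ to both sides yields exactly the claimed inequality, with the $\frac43\ln d + 2$ coming along from the \textproc{AdaHedge} slack.

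The main obstacle I anticipate is purely the first step: pinning down the exact form of the \textproc{AdaHedge} regret bound with the $\inf_{\eta>0}$ over a tunable second-order term plus the $\frac43\ln(d)+2$ additive constant, and making sure the variance-like quantity appearing there matches $v_t = \E_{i\sim\hat\p_t}[(\ell_t(\yhatt)-\ell_t(y_t^i))^2]$ (as opposed to, say, a variance around the mean loss). Everything after that — splitting the telescoping sum, adding and subtracting $\eta v_t$, and identifying the abstention gap — is routine algebra. A minor secondary point is verifying that $\ell_t(\yhatt) = \E_{i\sim\hat\p_t}[\ell_t(y_t^i)]$ so that the mixture prediction $\yhatt = \inner{\hat\p_t}{\y_t}$ is consistent with treating $\ell_t$ as a linear (here affine) function of the prediction, which is what lets the expert-advice regret bound apply to the mixed prediction directly.
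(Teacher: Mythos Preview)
Your proposal is correct and follows essentially the same route as the paper. The only cosmetic difference is that the paper states the \textproc{AdaHedge} bound in the square-root form $\sumT \ell_t(\yhatt) - \ell_t(y_t^i) \leq 2\sqrt{\ln(d)\sumT v_t} + \tfrac{4}{3}\ln(d) + 2$ and then uses the identity $2\sqrt{ab} = \inf_{\eta>0}\{a/\eta + \eta b\}$ to merge the $\eta\sumT v_t$ into the abstention-gap sum, whereas you propose to start from the equivalent $\inf_\eta$ form directly; after that, the add-and-subtract of $\ell_t(\yhatt)$ and the regrouping are identical.
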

Before we prove Lemma \ref{lem:abstention gap} let us compare Algorithm \ref{alg:AdaAb} with \textproc{Gaptron}. The updates of weight matrix $\W_t$ in \textproc{Gaptron} are performed with OGD. In Algorithm \ref{alg:AdaAb} the updates or $\hat{p}_t$ are performed using \textproc{AdaHedge}. The roles of $a_t$ in \textproc{Gaptron} and $b_t$ in Algorithm \ref{alg:AdaAb} are similar. The role of $a_t$ is to ensure that the surrogate gap is bounded by $0$, the role of $b_t$ is to ensure that the abstention gap is bounded by 0. 
\begin{proof}[Proof of Lemma \ref{lem:abstention gap}]
First, \textproc{AdaHedge} guarantees that
\begin{align*}
    \sumT \ell_t(\yhatt) - \ell_t(y_t^i) \leq 2\sqrt{\ln(d)\sumT v_t} + 4/3 \ln(d) + 2.
\end{align*}
Using the regret bound of \textproc{AdaHedge} we can upper bound the expectation of the loss of the learner as 
\begin{align*}
    & \sumT \left((1-b_t)\ell_t(\ystart) + b_t c_t\right) \\
    & = \sumT \left((1-b_t)\ell_t(\ystart) + b_t c_t + \ell_t(y_t^i) - \ell_t(\yhatt)\right) + \sumT\left( \ell_t(\yhatt)-\ell_t(y_t^i)\right) \\
    & \leq \sumT \left((1-b_t)\ell_t(\ystart) + b_t c_t + \ell_t(y_t^i) - \ell_t(\yhatt)\right) + 2\sqrt{\ln(d)\sumT v_t} + 4/3 \ln(d) + 2 \\
    & = \sumT \ell_t(y_t^i) + \inf_{\eta > 0}\left\{\frac{\ln(d)}{\eta} + \sumT \left((1 - b_t)\ell_t(\ystart) + c_t b_t + \eta v_t - \ell_t(\yhatt)\right)\right\}  + 4/3 \ln(d) + 2.
\end{align*}
\end{proof}

To upper bound the abstention gap by 0 is more difficult than to upper bound the surrogate gap as the negative term is no longer an upper bound on the zero-one loss. Hence, the abstention cost has to be strictly better than randomly guessing as otherwise there is no $\eta$ or $b_t$ such that the abstention gap is smaller than 0. The result for abstention can be found in Theorem \ref{th:abstention} below.

\begin{theorem}\label{th:abstention}
Suppose $\max_t c_t < \half$ for all $T$. Then Algorithm \ref{alg:AdaAb} guarantees
\begin{equation*}
    \sumT \left((1-b_t)\ell_t(\ystart) + b_t c_t\right) \leq \sumT \ell_t(y_t^i) + \min\left\{\frac{\ln(d)}{1-2\max_t c_t}, 2\sqrt{\ln(d)\sumT v_t}\right\} + 4/3 \ln(d) + 2.
\end{equation*}
\end{theorem}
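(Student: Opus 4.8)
The plan is to show that the \emph{Abstention gap} appearing in Lemma~\ref{lem:abstention gap} is non-positive for one particular constant choice of $\eta$, and is separately bounded by $\eta v_t$ for \emph{every} $\eta>0$; feeding these two facts back into Lemma~\ref{lem:abstention gap} immediately produces the $\min\{\cdot,\cdot\}$ bound, and the additive $\tfrac43\ln(d)+2$ is carried over unchanged.

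First I would put the abstention gap into closed form. Since $\ystart=\sign(\yhatt)$ and $b_t=1-|\yhatt|$, the quantity $(1-b_t)\ell_t(\ystart)+c_tb_t-\ell_t(\yhatt)$ is evaluated by a two-case split according to whether $\sign(\yhatt)=y_t$. Using $\ell_t(y')=\half(1-y_ty')$ one gets $\ell_t(\ystart)=0$ and $\ell_t(\yhatt)=\half b_t$ in the ``correct sign'' case, and $\ell_t(\ystart)=1$, $\ell_t(\yhatt)=\half(1+|\yhatt|)$, $1-b_t=|\yhatt|$ in the ``wrong sign'' case; a short computation shows both cases collapse to the same value $(c_t-\half)b_t$ (the degenerate $\yhatt=0$ case also gives $(c_t-\half)b_t$ since then $b_t=1$). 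Hence the abstention gap equals exactly $(c_t-\half)b_t+\eta v_t$.

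Next I would bound $v_t$. Because $\yhatt=\inner{\hat{\p}_t}{\y_t}$ is the $\hat{\p}_t$-mean of the $y_t^i$ and $\ell_t(\yhatt)-\ell_t(y_t^i)=\tfrac{y_t}{2}(y_t^i-\yhatt)$, we have $v_t=\tfrac14\,\mathrm{Var}_{i\sim\hat{\p}_t}(y_t^i)\le\tfrac14(1-\yhatt^2)=\tfrac14(1-|\yhatt|)(1+|\yhatt|)\le\half b_t$, using $y_t^i\in[-1,1]$ and $1+|\yhatt|\le 2$. Therefore the abstention gap is at most $(c_t-\half+\tfrac{\eta}{2})b_t$, with $b_t\ge 0$. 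For the first branch of the minimum, $\max_tc_t<\half$ makes $\eta:=1-2\max_tc_t$ strictly positive and forces $c_t-\half+\tfrac{\eta}{2}\le 0$ for all $t$, so each abstention gap is $\le 0$ and Lemma~\ref{lem:abstention gap} with this $\eta$ leaves only $\tfrac{\ln(d)}{1-2\max_tc_t}+\tfrac43\ln(d)+2$. For the second branch I would instead discard the non-positive terms $(c_t-\half)b_t$ to obtain $\sumT(\text{abstention gap})\le\eta\sumT v_t$, so the infimum in Lemma~\ref{lem:abstention gap} is at most $\inf_{\eta>0}\{\tfrac{\ln d}{\eta}+\eta\sumT v_t\}=2\sqrt{\ln(d)\sumT v_t}$; taking whichever of the two bounds is smaller gives the theorem.

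The main obstacle is the closed-form computation of the abstention gap: one must verify that the ``correct sign'' and ``wrong sign'' cases genuinely reduce to the single expression $(c_t-\half)b_t$, and then couple it with the variance estimate $v_t\le\half b_t$ so that the regularizer $\eta v_t$ can be absorbed into $(c_t-\half)b_t$. Once those two algebraic facts are in place, the rest is just substitution into Lemma~\ref{lem:abstention gap} and the standard AM--GM identity for $\inf_{\eta>0}\{\tfrac{\ln d}{\eta}+\eta\sumT v_t\}$.
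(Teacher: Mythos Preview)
Your proposal is correct and follows essentially the same route as the paper's proof: bound $v_t\le\tfrac12 b_t$ via the variance (Bhatia--Davis) inequality, show the non-$\eta$ part of the abstention gap is $\le 0$, then obtain the two branches by either choosing $\eta=1-2\max_tc_t$ or by dropping that non-positive part and optimizing over $\eta$. The one cosmetic improvement is that you collapse the two sign cases into the single closed form $(c_t-\tfrac12)b_t$, whereas the paper keeps them separate before reaching the same conclusion.
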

\begin{proof}
We start by upper bounding the $v_t$ term. We have
\begin{align*}
    v_t = \frac{1}{4} \E_{\hat{\p}_t}\left[(y_t^i - \yhatt)^2\right] \leq \frac{1}{4} (1 - \yhatt)(\yhatt + 1) \leq \half (1 - |\yhatt|),
\end{align*}
where the first inequality is the Bhatia-Davis inequality \citep{bhatia2000better}. 
As with the proofs of \textproc{Gaptron} we split the abstention gap in cases:
\begin{equation}
\begin{split}
    & (1 - b_t)\ell_t(\ystart) + c_t b_t + \eta v_t - \ell_t(\yhatt) \\
    & \leq (1 - b_t)\ell_t(\ystart) + c_t b_t + \eta \half (1 - |\yhatt|) - \ell_t(\yhatt) \\
    & = \begin{cases}
        c_t (1 - |\yhatt|) + \eta \half (1 - |\yhatt|) - \half (1 - |\yhatt|) & \text{ if } \ystart = y_t \\
        |\yhatt| + c_t (1 - |\yhatt|) + \eta \half (1 - |\yhatt|) - \half(1 + |\yhatt|) & \text{ if } \ystart \not = y_t.
    \end{cases}
\end{split}
\end{equation}
Note that regardless of the true label $(1 - b_t)\ell_t(\ystart) + c_t b_t - \ell_t(\yhatt) \leq 0$ since $c_t < \half$. Hence, by using Lemma \ref{lem:abstention gap}, we can see that as long as $c_t < \half$
\begin{equation*}
    \sumT (1-b_t)\ell_t(\ystart) + b_t c_t \leq \sumT \ell_t(y_t^i) + 2\sqrt{\ln(d)\sumT v_t} + 4/3 \ln(d) + 2.
\end{equation*}

Now consider the case where $\ystart = y_t$. In this case, as long as $\eta \leq 1 - 2 c_t$ the abstention gap is bounded by 0. If $\ystart \not = y_t$ then 
\begin{align*}
    |\yhatt| + c_t (1 - |\yhatt|) + \eta \half (1 - |\yhatt|) - \half(1 + |\yhatt|) = c_t (1 - |\yhatt|) + \eta \half (1 - |\yhatt|) - \half(1 - |\yhatt|).
\end{align*}
So as long as $\eta \leq 1 - 2 c_t$ the abstention gap is bounded by 0. Applying Lemma \ref{lem:abstention gap} now gives us
\begin{align*}
    \sumT (1-b_t)\ell_t(\ystart) + b_t c_t - \ell_t(y_t^i) \leq & \inf_{\eta > 0} \left\{\frac{\ln(d)}{\eta}  + \sumT \left((1 - b_t)\ell_t(\ystart) + c_t b_t + \eta v_t - \ell_t(\yhatt)\right) \right\}\\
    & + 4/3 \ln(d) + 2 \\
    \leq & \frac{\ln(d)}{1-2\max_t c_t} + 4/3 \ln(d) + 2,
\end{align*}
which completes the proof.
\end{proof}

With a slight modification of the proof of Theorem \ref{th:abstention} one can also show a similar result as Theorem 8 by \citet{neu2020fast}, albeit with slightly worse constants. We leave this as an exercise for the reader. 

\end{document}